%% file: main.tex
\documentclass[12pt]{article}

\usepackage{amsmath}
\usepackage{amssymb}
\usepackage{amsthm}
\usepackage{times}
\usepackage{graphicx}
\usepackage{color}
\usepackage{xcolor}
\usepackage{multirow}
\usepackage[authoryear]{natbib}
\usepackage{rotating}
\usepackage{bbm}
\usepackage{latexsym}

\usepackage{dsfont}
\usepackage{algorithm}
\usepackage{subcaption}
\usepackage{bm}
\usepackage{booktabs}
\usepackage{mathtools}
\usepackage{tabularx}
\usepackage{colortbl}
\usepackage[framemethod=TikZ]{mdframed}
\usepackage{doi}
\usepackage{hyperref}
\usepackage[nameinlink,capitalize]{cleveref}

\hypersetup{
    colorlinks=true,
    citecolor=blue,
    urlcolor=blue,
    linkcolor=blue,
}

\newcommand{\captionfonts}{\normalsize}

\makeatletter
\long\def\@makecaption#1#2{%
  \vskip\abovecaptionskip
  \sbox\@tempboxa{{\captionfonts #1: #2}}%
  \ifdim \wd\@tempboxa >\hsize
    {\captionfonts #1: #2\par}
  \else
    \hbox to\hsize{\hfil\box\@tempboxa\hfil}%
  \fi
  \vskip\belowcaptionskip}
\makeatother

\graphicspath{{figures/}{../figures/}}
\setkeys{Gin}{draft=false}

\newenvironment{keywords}
  {\vspace{1ex}\noindent{\bf Keywords:} }
  {\par\vspace{1ex}}

\newcommand{\acks}[1]{%
\subsection*{Acknowledgments}
#1
}

\newtheorem{theorem}{Theorem}
\newtheorem{lemma}[theorem]{Lemma}

\newtheorem{maintheorem}{Theorem}

\theoremstyle{definition}
\newtheorem{definition}[theorem]{Definition}

\theoremstyle{remark}

\crefname{assumption}{assumption}{assumptions}
\Crefname{assumption}{Assumption}{Assumptions}
\crefname{maintheorem}{theorem}{theorems}
\Crefname{maintheorem}{Theorem}{Theorems}

\newcommand{\floor}{\mathrm{floor}}
\newcommand{\inv}{\mathrm{inv}}

\newcommand{\mathmod}{\mathrm{mod}}

\mdfdefinestyle{thmbox}{
  backgroundcolor=gray!8,
  linecolor=gray!45,
  linewidth=0.6pt,
  roundcorner=4pt,
  innertopmargin=8pt,
  innerbottommargin=8pt,
  innerleftmargin=10pt,
  innerrightmargin=10pt,
}

\definecolor{phi11bg}{gray}{0.93}
\definecolor{phi12bg}{gray}{0.97}
\definecolor{phi13bg}{gray}{0.93}
\definecolor{highlight}{RGB}{220, 238, 255}

\definecolor{stage1blue}{HTML}{DAE8FC}
\definecolor{stage2green}{HTML}{D5E8D4}
\definecolor{stage3orange}{HTML}{FFE6CC}
\colorlet{phi1bg}{stage1blue}
\colorlet{phi2bg}{stage2green}
\colorlet{phi3bg}{stage3orange}

\definecolor{paramHL}{HTML}{FFD966}
\definecolor{widthHL}{HTML}{E6CCFF}

\newcommand{\hp}[1]{{\setlength{\fboxsep}{1.2pt}\colorbox{paramHL}{\ensuremath{#1}}}}

\def\m{\mathbf{m}}

\def\x{\mathbf{x}}

\newcommand{\balpha}{\bm{\alpha}}
\def\y{\mathbf{y}}

\def\c{\mathbf{c}}

\begin{document}

\thispagestyle{empty}
\markboth{}{Neural Computation manuscript}

\ \vspace{20mm}\\

{\LARGE On Explicit Super-Expressive Approximation for Neural Networks}

\ \\
{\bf \large Feng-Lei Fan$^{\displaystyle 1}$, 
Ze-Yu Li$^{\displaystyle 2}$, 
Chen-Yu Wang$^{\displaystyle 2}$, 
and Jian-Jun Wang$^{\displaystyle 3,*}$}\\

{$^{\displaystyle 1}$Department of Data Science, City University of Hong Kong, 83 Tat Chee Avenue, Kowloon Tong, Kowloon, Hong Kong.}\\
{\texttt{fenglfan@cityu.edu.hk}}\\

{$^{\displaystyle 2}$Department of Mathematics, Chinese University of Hong Kong, University Avenue, Shatin, N.T., Hong Kong.}\\
{\texttt{1155184076@link.cuhk.edu.hk}; \texttt{WangChenyuCUHK@outlook.com}}\\

{$^{\displaystyle 3}$School of Civil Engineering and Architecture, Guangxi Minzu University, Nanning 530006, China.}\\
{\texttt{20250017@gxmzu.edu.cn}}\\

\ \\[-2mm]
{$^{\displaystyle *}$Corresponding author.}



\input{chapters/00_Abstract}

\newpage
\setcounter{page}{1}

\input{chapters/01_Introduction}

\input{chapters/02_MainResults}

\input{chapters/03_Theorem1}

\input{chapters/04_Theorem2}

\input{chapters/07_Discussion}

\acks{This work is supported in part by the Lagrange Mathematics and Computing Research Center, Huawei Technologies France, and the Startup Fund of the City University of Hong Kong.}

\newpage

\vskip 0.2in

\bibliographystyle{apalike}
\bibliography{main}

\end{document}

%% file: chapters/00_Abstract.tex
\begin{abstract}
In this work, we investigate the fixed-architecture neural network approximation with explicit parameter bounds and elementary activations. While prior work demonstrated super-expressive approximation using fixed-size networks, they lack quantitative and non-asymptotic characterizations of parameter magnitude with respect to the approximation error. We resolve this issue by introducing the Chinese Remainder Theorem as a constructive encoding mechanism. For Lipschitz continuous functions on $[0,1]^D$, we construct a width-$\max\{D,4\}$, depth-$5$ network with explicit parameter-error trade-offs. For H\"older-smooth functions in $C^{r,\gamma}_A\left([0,1]^D\right)$, our fixed network of width $\max\{2D,\ D+5N+1\}$ and depth $r + 9$ achieves the parameter magnitude $\mathcal{P}$ bounded by $\log_2 \mathcal{P}=\mathcal{O}\bigl(\varepsilon^{-2D/(r+\gamma)}\log(1/\varepsilon)\bigr)$. This is the dual result compared to those in the parameter-bounded and architecture-unbounded paradigm.
\end{abstract}

\begin{keywords}
Neural network approximation, fixed-architecture neural networks, elementary super-expressive activation, H\"older-smooth functions
\end{keywords}

%% file: chapters/01_Introduction.tex
\section{Introduction}
\subsection{Background and Our Result}
Over the past few years, deep neural networks have achieved tremendous success in artificial intelligence \citep{babaiee2025master,Patrick2024Scaling} and data science \citep{Wu2025Data,Adaleta2024time}, motivating extensive research to establish the theoretical foundation of deep learning. In this context, a substantial body of research explores the approximation capacity
of deep neural networks to reveal the strong power of deep learning. These works investigate how well a network can express the target function via the composition of affine linear transforms and nonlinear activations. Typically, the approximation error is characterized in terms of the total number of parameters, the width and depth, or the number of neurons in a neural network. In particular, the approximation of several important basic function spaces has been well studied, \textit{e.g.}, continuous function space \citep{yarotsky2018Optimal,Shen2020deep}, $C^s$ function space \citep{yarotsky2020phase,Lu2021deep}, Sobolev space \citep{yarotsky2017error,hon2021simultaneous}, Besov space \citep{siegel2023optimal}, and Korobov space \citep{moise2022shallow,YANG2024near}. Although these studies have reached (nearly) optimal approximation rates, the size of the network typically grows exponentially as the error goes small.

One special direction of approximation results utilizes super-expressive activation functions, which allow to approximate a class of functions by a fixed-architecture network independent of the function and the error. That is, without enlarging the size of the network, one can achieve arbitrary accuracy by merely adjusting the weights, where multiple activations can be applied. The first fixed-architecture approximation result was given by \citep{maiorov1999lower}, which established the existence of super-expressive activations. The key idea is to construct a dense countable subset in $C([0,1])$ and then reduce the multivariate case to the univariate one via the Kolmogorov Superposition Theorem (KST) \citep{Kol1957}. Nevertheless, the super-expressive activation here is quite complex and has no closed-form formulation. Following the idea of KST, this computational intractability was solved by \citep{yarotsky2021elementary}.  In \citep{yarotsky2021elementary}, it was shown that any $C\left(([0,1]^d\right))$ function can be approximated by a fixed-size network with multiple elementary activation functions. In \citep{shen2022fixed}, the authors utilized the elementary universal activation function (EUAF) and constructed a fixed network to approximate any continuous functions. This work explicitly showed that an EUAF network requires only $\mathcal{O}(1)$ depth and $\mathcal{O}(d^2)$ width to achieve super-expressiveness, using irrational winding in Diophantine approximations and KST. Furthermore, \cite{wang2025peuaf} proposed a parametric EUAF (PEUAF) based fixed-architecture network with a similar super-expressive result. They also showed that such super-expressive networks are trainable in practice.

However, achieving super-expressiveness does come with a cost. Once the width and depth are fixed, the approximation burden is transferred to the magnitude of the parameters. Hence, establishing a quantitative and non-asymptotic parameter bound \textit{w.r.t.} the approximation error is crucial in fixed-architecture approximation.  \cite{beknazaryan2022neural} gives an explicit super-expressive approximation of H\"older continuous functions by a refined Kronecker's Theorem. Although the network size and parameter magnitude are explicit, the activation is recalibrated as the error tolerance changes. That is, \cite{beknazaryan2022neural} trades the growth of architecture for growth of activation complexity. To the best of our knowledge, existing work on fixed-architecture network approximation lacks quantitative and non-asymptotic bounds on parameter magnitudes \textit{w.r.t.} approximation error, and those that do provide relevant results rely either on error-dependent activation functions or on prior encoding of the target function in the initial input. The main reason is that they utilized the Diophantine approximation or the KST, which contains unknown prefactors hard to be explicitly formulated. 

In this work, we propose a quantitative and non-asymptotic fixed-architecture approximation of Lipschitz continuous and H\"older-smooth functions with the parameter magnitude explicitly characterized by the approximation accuracy. Our central mechanism is the introduction of the Chinese Remainder Theorem (CRT). Given a partition of the input domain indexed by coprime integers and the quantization of the function value indexed by integers, we leverage CRT to explicitly associate each cell to a quantized value. Then a fixed-architecture network with multiple elementary super-expressive activation functions to reconstruct the target.
Our main results are as follows:

{\fontsize{9}{10}\selectfont $\bullet$} 
\textbf{Approximation of Lipschitz functions.} For any Lipschitz function in $[0,1]^D$, we construct a network of width $\max\{D,4\}$ and depth $5$ to not only prove the existence of such a fixed-size network but also provide explicit parameter magnitudes $\mathcal{P}$ satisfying $\log_2\mathcal{P}\le O\bigl(\varepsilon^{-2D}\log(1/\varepsilon)\bigr)$ that relate the target accuracy $\varepsilon$ to the parameter magnitude of the network. To the best of our knowledge, this is the first explicit relation made in the area of super-expressive approximation.

{\fontsize{9}{10}\selectfont $\bullet$} 
\textbf{Approximation of H\"older-smooth functions.} For any H\"older-smooth function $f \in C^{r,\gamma}_A([0,1]^D)$, we construct a rational gridwise polynomial surrogate via a local Taylor expansion. This approximation is achieved using a fixed-architecture network with width $\max\{2D,\ D+5N+1\}$ and depth $r + 9$. Crucially, higher target smoothness mitigates the growth of the required parameter magnitude, which we bound by $\log_2\mathcal{P}\le O\bigl(\varepsilon^{-2D/(r+\gamma)}\log(1/\varepsilon)\bigr)$. This demonstrates that the smoothness of the target function improves the scaling law, reducing the complexity exponent from $2D$ to $2D/(r+\gamma)$.

Traditional approximation results provide the change in architecture with respect to the error while the parameter magnitude is fixed. Our result can be regarded as dual to the traditional paradigm, which is instrumental in completing the picture of super-expressive approximation.


\subsection{Related Work}

\textbf{Classical neural network approximation}.
The approximation capacity of neural networks has been a central theme in deep learning theory and has seen a flourishing body of work in recent years. Early works in universal approximation \citep{cybenko1989approximation,hornik1989universal,hornik1991approximation,leshno1993nonpolynomial,barron1993universal} showed that sufficiently wide or deep fully-connected neural networks can approximate any continuous function on compact sets without characterizing the approximation rate. The universal approximation theorem of other networks is also widely studied. For example, \cite{zhou2020cnn, zhou2020downsampling, yu2023deep,li2025fullyconv} demonstrated that deep convolutional networks can  approximate arbitrary continuous functions on compact sets with sufficient depth, showing that locality and weight sharing preserve universality. In addition, \citep{yun2019transformers} proved that transformers universally approximate continuous permutation-equivariant sequence-to-sequence maps, and \cite{lin2018Resnet} showed that the ReLU activated ResNet with one neuron in each layer is a universal approximator. Besides, the universal approximation of ResNet can also be established through the dynamic system \citep{Li2023Deep,Li2019stochastic}.

While the classical universal approximation theorems mainly focus on the existence of neural network approximants, a more quantitative line of research has subsequently emerged, aiming to characterize the approximation error in terms of network width, depth, parameter count, and the number of neurons. The error estimate using the total number of parameters in ReLU networks has been well studied and has reached (nearly) optimal approximation rate for several basic function spaces, \textit{e.g.}, continuous functions \citep{yarotsky2018Optimal}, smooth functions \citep{yarotsky2020phase,Petersen2018Optimal}, and Sobolev space \citep{hon2021simultaneous,yarotsky2017error}.  Errors in terms of width and depth further motivated the works on the non-asymptotic approximation, \textit{e.g.,} continuous functions\citep{Shen2020deep,Shen2022optimal}, $C^s$ functions \citep{Lu2021deep}, and functions in Korobov space \citep{moise2022shallow, YANG2024near}, which not only achieve (nearly) optimal approximation error, but also are explicit without unknown prefactors. Although the errors are (nearly) optimal in classical network approximation, the size of the networks increases exponentially as the error goes small. 

\vspace{+5mm}

\noindent\textbf{Super-expressive activation and fixed-architecture approximation}. Designing super-expressive activations is one of the directions to improve the approximation rate, where a class of target functions can be approximated by a neural network using a finite family of activation functions and fixed architecture only depending on the input dimension. The initial idea of super-expressive approximation comes from the KST \citep{Kol1957}, which shows that any continuous function $f\in C\left([0,1]^d\right)$ can be represented as $f(\mathbf{x})=\sum_{i=0}^{2d}g_i \left(\sum_{j=1}^dh_{i,j}(x_j)\right)$. Note that the KST can be viewed as the network composition with activations depending on the target function. See \citep{igelnik2003kolmogorov,kuurkova1991kolmogorov,kuurkova1992kolmogorov,maiorov1999lower} for further studies on the connection of KST and neural networks. Especially, in \citep{maiorov1999lower}, the authors used the KST to design a special activation and construct a fixed-size network with $\mathcal{O}(d)$ neurons, which can approximate any continuous function on $[-1,1]^d$ with an arbitrary error. Since this activation is complicated and has no closed form, this network cannot be applied in practice. Later, several works \citep{guliyev2016single,guliyev2018singlefixed,ismailov2014approximation,guliyev2018fixedweights} further studied similar activations. For a long time, the computational intractability in super-expressive approximation remains a hard issue. Another line of research explored weaker forms of super-expressiveness with elementary activations, where the network size has to grow for higher accuracy, but much slower than the power laws in the classical neural network approximation theory. For example, \cite{yarotsky2020phase} showed that a network with ReLU and $\operatorname{sin}$ activations can approximate Lipschitz functions with exponential error rate \textit{w.r.t.} the number of parameters. Besides, \cite{shen2021threehidden} also demonstrated that a three-layer network with $\lfloor\cdot\rfloor$, $2^x$, and $\mathbf{1}_{x\geq0}$ as activations
can approximate Lipschitz functions with a similar error.

The computational intractability of super-expressive approximation was solved in \citep{yarotsky2021elementary}, where a fixed-size network was constructed with multiple elementary activation functions, \textit{e.g.}, $\operatorname{sin}$ \& $ \operatorname{arcsin}$, $\lfloor\cdot\rfloor$ \& a non-polynomial analytic function, or a specific $C^1$ piecewise elementary function, to approximate any $C\left([0,1]^d\right)$ functions within arbitrarily small error. It was also shown in \cite{yarotsky2021elementary} that most of the practically used activations are not super-expressive. However, this result does not give explicit characterization of the size dependence on $d$. To make the dependence clear, \cite{shen2022fixed} showed that a EUAF network of width $36d(2d+1)$ and
depth $11$ can approximate any continuous function on a $d$-dimensional closed cube. Later, \citep{wang2025peuaf} showed that a fixed-architecture network with with parametric EUAF is super-expressive are trainable in practice. To further give the parameter magnitude in terms of approximation accuracy, \cite{beknazaryan2022neural} utilized  piecewise exponential and $\lfloor\cdot\rfloor$ activations to give a super-expressive approximation of H\"older continuous functions with explicit integer weights, where the activation is not fixed in the usual sense and varies with the target accuracy. In addition, \cite{bournez2025universal} constructed a fixed ReLU network to approximate arbitrary continuous functions, but it requires the encoding of the target function to be given in the initial input.

%% file: chapters/02_MainResults.tex
\section{Preliminaries}\label{sec:notation}

We collect notations and definitions used throughout the paper.  We first introduce the basic function spaces, including Lipschitz continuous functions, H\"older-smooth functions, and gridwise polynomials.

\begin{definition}[Lipschitz Continuous Function]
\label{def:lipschitz}
Let $A>0$. Define
\begin{equation}
\operatorname{Lip}_A(\Omega)
:=\bigl\{f:\Omega\to\mathbb R:\ |f(\x)-f(\y)|\le A\|\x-\y\|_\infty\bigr\}.
\end{equation}
We say $f\in\operatorname{Lip}_A(\Omega)$ is Lipschitz continuous on $\Omega$.
\end{definition}

\begin{definition}[H\"older-Smooth Function]
\label{def:holder}
Let $A>0$, $r\in\mathbb N$, $\gamma\in(0,1]$. Define
\begin{equation}
C^{r,\gamma}_A(\Omega)
:=\biggl\{f:\Omega\to\mathbb R:\ 
\sup_{\substack{g\in\partial^{r}f\\ \x\ne\y}}
\frac{\bigl|g(\x)-g(\y)\bigr|}{\|\x-\y\|_\infty^{\gamma}}\le A\biggr\},
\end{equation}
where $f$ has continuous partial derivatives up to order $r$ and $\partial^{r}f$ is the set of up to order-$r$ partial derivatives. We say $f\in C^{r,\gamma}_A(\Omega)$ is H\"older-smooth on $\Omega$.
In particular $C^{0,1}_A(\Omega)=\operatorname{Lip}_A(\Omega)$.
\end{definition}

\begin{definition}[Gridwise Polynomials with Rational Coefficients]
\label{def:rgpp}
Let $\mathrm{Poly}_{D,r}(\mathbb{Q})$ denote the  polynomials at degree $r$ with rational coefficients:
\begin{equation}
\mathrm{Poly}_{D,r}(\mathbb{Q}) := \bigg\{ \sum_{\substack{ \\ |\balpha|\le r}} c_{\balpha}\x^{\balpha} \;\bigg|\; c_{\balpha} \in \mathbb{Q} \bigg\},
\end{equation}
where $\balpha := (\alpha_1,\alpha_2,...,\alpha_D) \in\mathbb N_0^D$ and $\x^{\balpha} := \prod_{d=1}^D x_d^{\alpha_d}$. 

Consider the uniform partition of the domain $\Omega = [0,1]^D$ into $M^D$ grids  indexed by $\mathbf{m} \in \{0,\dots,M-1\}^D$:
\begin{equation}
\Omega_{\mathbf m} := \prod_{d=1}^{D} \left[ \frac{m_d}{M}, \frac{m_d+1}{M} \right),
\end{equation}
with the convention that the right endpoint is included when $m_d = M-1$. A function $f: \Omega \to \mathbb{R}$ is called a gridwise polynomial with rational coefficients if, for every grid $\Omega_{\mathbf m}$, there exists a polynomial $P \in \mathrm{Poly}_{D,r}(\mathbb{Q})$ such that $f(\x) = P(M\x - \mathbf{m})$ for all $\x \in \Omega_{\mathbf m}$.
\end{definition}

Gridwise polynomials form an important class of functions. They are capable of capturing local variations, boundary layers, spikes, and piecewise regular structures, while avoiding the rigidity of a single global polynomial. Moreover, they have a natural connection to scientific computing: splines and finite element functions are precisely built from local polynomial pieces over grids. In fact, gridwise polynomials are the Cartesian-grid analogue of the local polynomial spaces used in finite element discretizations \citep{ciarlet2002finite}, which are fundamental in modeling a wide range of physical phenomena, including elasticity, heat transfer, and fluid flow. On structured Cartesian or hexahedral meshes, this is exactly the type of polynomial structure captured by the gridwise class \citep{reddy2010finite}.

We introduce the bit length to measure the bits required to encode a rational number, which follows the standard convention in algorithmic information theory and computational complexity. 
\begin{definition}[Bit Length of a Rational Number \citep{schrijver1986theory}]
\label{def:rational-bit-length}
For $q\in\mathbb Q$, write $q$ in reduced form:
\begin{equation}
\label{eq:rational-st}
q=\frac{s(q)}{g(q)},
\end{equation}
where $s(q)\in\mathbb Z$, $g(q)\in\mathbb N_+$, and
$\gcd(|s(q)|,g(q))=1$.  For $q=0$, we use the convention
$s(q)=0$ and $g(q)=1$.  Define the rational bit length by
\begin{equation}
\label{eq:rational-bit-length}
\mathtt{bits}(q)
:=
\bigl\lceil \log_2(|s(q)|+2)\bigr\rceil
+
\bigl\lceil \log_2(g(q)+1)\bigr\rceil .
\end{equation}
\end{definition}

We then introduce the activation functions used in this work, which are all simple and have been widely used in literature. See Figure~\ref{fig:activation-primitives} for visualization.

\begin{definition}[Floor Activation \citep{shen2021floorrelu}]
\label{def:fractional-part-activation}
The floor activation is given by
\begin{equation}
    \rho_{\floor}:\mathbb R\to\mathbb Z,
    \qquad
    \rho_{\floor}(t):=\lfloor t\rfloor ,
\end{equation}
where $\lfloor t\rfloor$ denotes the largest integer not exceeding $t$.
Equivalently, $\rho_{\floor}(t)=k$ whenever $t\in[k,k+1)$ for some
$k\in\mathbb Z$. 
We also write $\rho_{\mathrm{frac}}(t)=t-\lfloor t\rfloor$ as a derived notation
\end{definition}

\begin{definition}[RePU Activation, \citep{shen2023repu}]
\label{def:repu-activation}
For an integer $s\in\mathbb N_+$, the \emph{rectified power unit}
$\mathrm{RePU}_s$ is defined as
\begin{equation}
\rho_s(t)=
\begin{cases}
t^s, & t\ge 0,\\
0, & t<0.
\end{cases}
\end{equation}
In particular, when $s=1$, $\rho_s$ is the usual ReLU activation; when $s=2$, it is often called the ReQU activation. We can use RePU activations as exact algebraic primitives for multiplication.  More precisely, we define 
\begin{equation}
    \label{eq:multiple}
\mathtt{Mult}(u,v)
:=
\big(\rho_2(u+v)+\rho_2(-u-v)
-\rho_2(u-v)-\rho_2(v-u)\big)/
4.
\end{equation}
\end{definition}

\begin{definition}[Reciprocal Activation  \citep{boulle2020rational}]
\label{def:safe-reciprocal-activation}
Define $\rho_{\rm \inv}:\mathbb R\to\mathbb R$ by
\begin{equation}
\rho_{\rm \inv}(t)
=
\begin{cases}
1/t, & t\ge 1,\\
2-t, & t<1,
\end{cases}
\end{equation}
which, together with \(\rho_{\mathrm{floor}}\),  enables the exact modular operation
$$
M \bmod p = M - p \rho_{\mathrm{floor}}\bigl(M \rho_{\mathrm{inv}}(p)\bigr),
$$
for every $M\in\mathbb{Z}$ and $p\in \mathbb{N}^+$.

\end{definition}

\begin{figure}[htbp]
    \centering
    \includegraphics[width=0.7\linewidth]{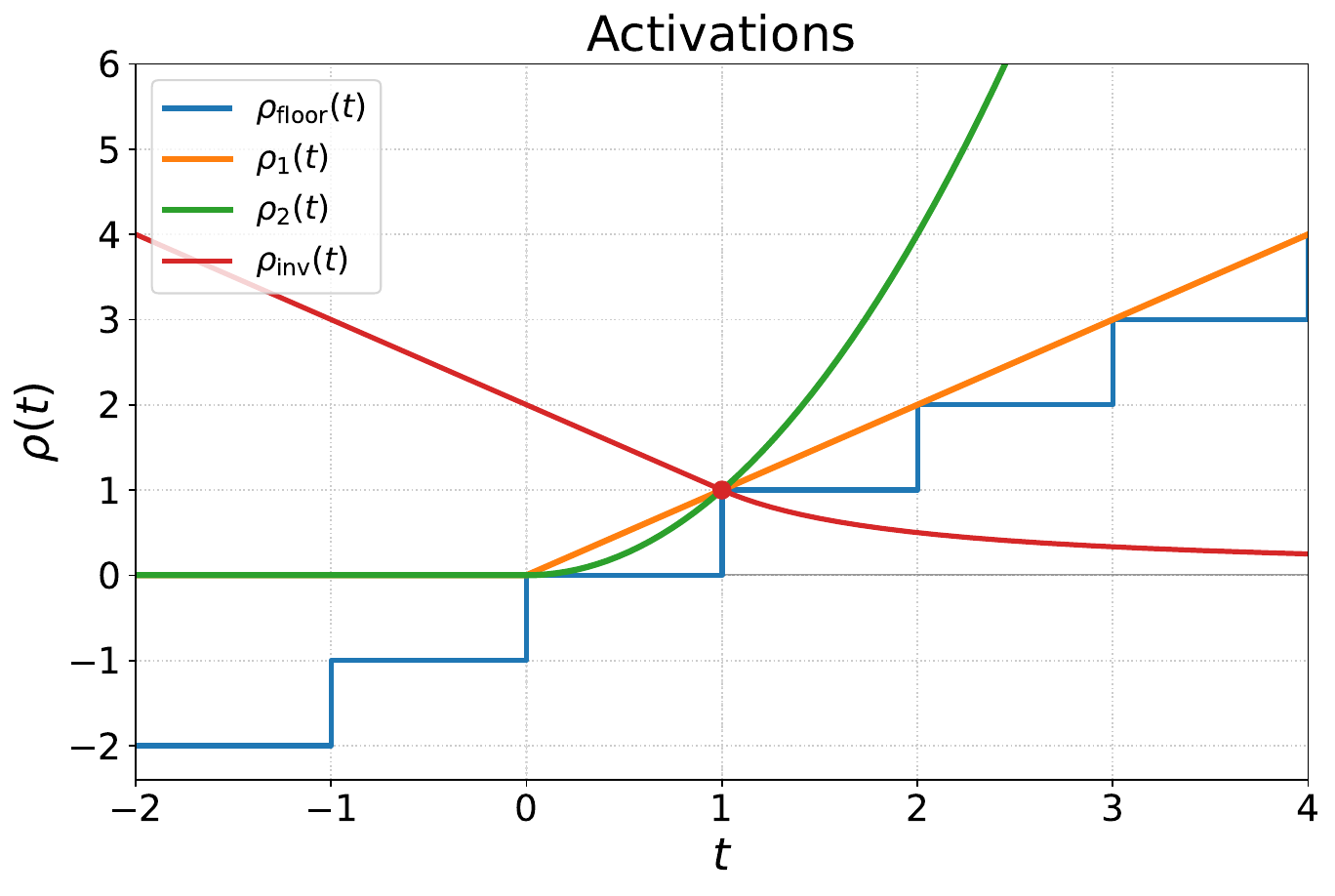}
        \vspace{-0.5cm}
    \caption{
    Visualization of  the floor  activation $\rho_{\floor}(t)$, the RePU activation $\rho_1(t),\rho_2(t)$, and the reciprocal activation $\rho_{\inv}$.
    }
    \label{fig:activation-primitives}
\end{figure}


%% file: chapters/03_Theorem1.tex
\section{Approximation of Lipschitz Functions}
\label{sec:section3}

In this section, we propose a fixed-architecture construction for approximating Lipschitz continuous functions with explicit parameter growth. This bridges a critical gap in super-expressive approximation: previous works either established mere existence without quantifying parameter scales \citep{yarotsky2021elementary, shen2022fixed}. Unlike their work that used irrational winding and KST, we apply CRT to give an explicit construction. For the first time, our theorem provides an explicit bound on the parameter magnitude in terms of the approximation error.

\begin{maintheorem}[Lipschitz Continuous Function]
\label{thm:main-theorem1}
Let $\Omega=[0,1]^D$ and  $f\in \operatorname{Lip}_A(\Omega)$. Then for arbitrary $\varepsilon>0$, there exists a network $\Phi$ of width $\max\{D, 4\}$ and depth $5$ with $\rho_{\floor}, \rho_1,\rho_2$, and $\rho_{\inv}$ as activations, such that
\begin{equation}
\label{eq:main-continuous-integral-error}
\bigl|\Phi(\x)-f(\x)\bigr|
\
\le
\varepsilon.
\end{equation}
Moreover,  the parameter magnitude satisfies
\begin{equation}
\label{eq:main-continuous-parameter-bound}
\mathcal P(\Phi)\;\le\;\prod_{i=1}^{M_\varepsilon^{D}}\Bigl(1+i\,(J_\varepsilon+1)\,(M_\varepsilon^{D})!\Bigr),
\end{equation}
where $B =  \|f\|_{L^\infty(\Omega)}$,
$ M_\varepsilon:= \left\lceil 2A/\varepsilon \right\rceil$, and
$ J_\varepsilon:=\left\lceil 4B/\varepsilon\right\rceil
$.
\end{maintheorem}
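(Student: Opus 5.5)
We build the network as a three-stage pipeline: quantize the input to a grid index, map that index to a CRT modulus, and decode the quantized function value by a single modular reduction.

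\textbf{Discretization.} Set $M=M_\varepsilon$ and $J=J_\varepsilon$. Partition $\Omega$ into the cells $\Omega_{\m}$ of Definition~\ref{def:rgpp}. Enumerate them bijectively by $k(\m)=1+\sum_{d=1}^D m_d M^{d-1}\in\{1,\dots,M^D\}$. On each cell, take the centre $\x_{\m}$. Since $|f(\x)-f(\x_{\m})|\le A/(2M)\le\varepsilon/4$, replacing $f$ by its value at the centre costs at most $\varepsilon/4$.

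\textbf{Value quantization.} Quantize the range $[-B,B]$ with step $h=2B/J\le\varepsilon/2$. Define the integer $j_{\m}=\lfloor (f(\x_{\m})+B)/h\rfloor\in\{0,\dots,J\}$. Then $\bigl|f(\x_{\m})-(-B+j_{\m}h)\bigr|\le h\le\varepsilon/2$, so the total error is at most $3\varepsilon/4<\varepsilon$. The remaining task is to realize $\x\mapsto j_{\m(\x)}$ exactly with a fixed architecture.

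\textbf{CRT encoding.} Set $N=M^D$ and $L=N!$, and use the moduli $p_i=1+i(J+1)L$ for $i=1,\dots,N$. These are the classical pairwise coprime moduli: any common prime divisor $q$ of $p_i$ and $p_{i'}$ divides $(i'-i)(J+1)L$. Since $|i'-i|<N$, we have $q\mid (J+1)L$, yet $q\mid p_i\equiv1\pmod q$, a contradiction. Each $p_i>J$, so by the CRT there is an integer $0\le K<\prod_i p_i$ with $K\equiv j_{\m}\pmod{p_{k(\m)}}$ for every cell, and then $j_{\m}=K\bmod p_{k(\m)}$ exactly. The parameter $K$ is the one attaining the bound \eqref{eq:main-continuous-parameter-bound}. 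All other weights ($M$, $M^{d-1}$, $(J+1)L$, $h$, $B$) are smaller than $\prod_i p_i$, so they fit under the same bound.

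\textbf{Network realization.} We assemble the layers as follows.
\begin{itemize}
\item[(1)] Layer 1 (width $D$) computes $m_d=\min\{\rho_{\floor}(Mx_d),M-1\}$. To handle the boundary, use $\rho_{\floor}(Mx_d)-\rho_1(\rho_{\floor}(Mx_d)-(M-1))$; this may need a second layer, or can be absorbed because $\rho_{\floor}(M x_d - \delta)$ with tiny $\delta$ changes the error only negligibly.
\item[(2)] An affine map gives $k$, and then $p=1+k(J+1)L$.
\item[(3)] The next layers compute $\rho_{\inv}(p)=1/p$, since $p\ge1$.
\item[(4)] Then compute $\rho_{\floor}(K\cdot\rho_{\inv}(p))$ while carrying $p$ forward through $\rho_1$, since $p>0$.
\item[(5)] Compute $\mathtt{Mult}(p,\cdot)$ via $\rho_2$ of \eqref{eq:multiple}; this uses 4 neurons and gives the width bound 4.
\item[(6)] Finally form $K-p\lfloor K/p\rfloor$, followed by the affine map $-B+h(\cdot)$.
\end{itemize}
Counting layers in this order should yield depth 5 and width $\max\{D,4\}$. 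The constant $K$ enters as a weight or bias, never as an input.

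\textbf{Main obstacle.} The main obstacle is the exact bookkeeping of depth and width. Carrying $p$ alongside $1/p$ and feeding both into the $\rho_2$-multiplication must fit within 4 neurons and 5 layers. The grid boundary $x_d=1$ must also be handled without an extra layer. We would first try to merge the affine maps into adjacent layers, and to use the identity $p=\rho_1(p)$ for carrying values through.
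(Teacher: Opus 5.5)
Your construction is the paper's: the same base-$M$ grid code, the same G\"odel-type moduli $p_i=1+i(J+1)(M^D)!$, the same CRT integer $K$, and the same readout $K-p\lfloor K/p\rfloor$ through $\rho_{\inv}$, $\rho_{\floor}$ and $\mathtt{Mult}$. The bookkeeping you call the main obstacle works out exactly as in the paper's layer table, with no extra merging needed:
layer 1 is $(\rho_{\floor}(Mx_d))_{d\le D}$;
layer 2 absorbs the affine map to $p$ and outputs $(\rho_{\inv}(p),\rho_1(p))$;
layer 3 outputs $(\rho_{\floor}(K\rho_{\inv}(p)),\rho_1(p))$;
layer 4 is the four $\rho_2$ units of $\mathtt{Mult}(p,q)$;
layer 5 is the affine output $-B+h\bigl(K-\mathtt{Mult}(p,q)\bigr)$.
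This gives width $\max\{D,4\}$ and depth $5$. Your use of cell centres instead of corners is a small improvement: the error is $3\varepsilon/4$ rather than the paper's $\delta+A/M\le\varepsilon$. The slack it buys is useful below.

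The one real gap is the face $x_d=1$, and neither of your two fixes works as stated.
\begin{itemize}
\item The shift $\rho_{\floor}(Mx_d-\delta)$ returns $-1$ for $x_d\in[0,\delta/M)$. There the computed code is either outside $\{1,\dots,M^D\}$ or the code of a different cell, so the decoded level is wrong.
\item The clip $(M-1)-\rho_1\bigl(M-1-\rho_{\floor}(Mx_d)\bigr)$ needs a nonlinearity between the floor layer and the affine map producing $p$, so the depth becomes $6$.
\end{itemize}
The paper does not resolve this either. It asserts $\rho_{\floor}(Mx_d)\in\{0,\dots,M-1\}$, which fails at $x_d=1$. For $D\ge2$ the points $(1,0,\dots,0)$ and $(0,1/M,0,\dots,0)$ get the same code, and $(1,\dots,1)$ gets a modulus outside the CRT system.

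A fix that costs nothing is to scale rather than shift. Use $\rho_{\floor}\bigl((M-\tfrac12)x_d\bigr)$, which lies in $\{0,\dots,M-1\}$ for all $x_d\in[0,1]$. The induced cells are products of intervals of length at most $2/(2M-1)$. With your centre representatives, the Lipschitz error is at most $A/(2M-1)\le A/M\le\varepsilon/2$, so the total error stays at most $\varepsilon$. The only altered weight, $M-\tfrac12$, is smaller than $M$, so the parameter bound is unchanged.
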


\subsection{Idea of the Construction in Theorem \ref{thm:main-theorem1}}
\label{subsec:proof-maintheorem1}

Here, we show the idea of proving Theorem \ref{thm:main-theorem1} step by step. The construction follows a fixed-architecture finite-fitting strategy, which has been proposed in \citep{shen2021threehidden} for a weaker form of super-expressiveness. Our two novel main ideas are:
1) First, a continuous function is simplified to a finite number of representative values through quantization. 
2) Then, unique pairwise coprime integers label the domain grids, enabling CRT to bridge the domain and function value. We take the following four steps:

\emph{1. Divide and reduce the target to finitely many codes.}
Choose $M\in\mathbb N_+$, we use $\mathbf m=(m_1,\ldots,m_D)\in \{0,\ldots,M-1\}^D$ as the coordinate of a grid.  For each $\mathbf m$, define
\begin{equation}
Q_{\mathbf m}
:=
\prod_{d=1}^D
\left[
\frac{m_d}{M},
\frac{m_d +1}{M}
\right),
\end{equation}
with the convention that an interval whose right endpoint is $1$ is closed,
and otherwise is half-open.  Then
$\bigcup_{\mathbf m\in\{0,\ldots,M-1\}^D}
Q_{\mathbf m}$ forms a partition of $\Omega=[0,1]^d$.  Let
$
\x_{\mathbf m}
:=
\m/M
$
be the corner of $Q_{\mathbf m}$.

Now, we quantize the function value of $f$. Recall that $\Vert f \Vert_\infty = B$. Then we choose $J\in\mathbb N_+$ and set $\delta:=2B/J$. Thus, we have $J$ intervals $[j\cdot \delta-B, (j+1)\cdot \delta-B],  j=0, 1, \cdots, J-1$. Due to the Lipschitz continuity of $f$, for any $J$, as long as $M$ is sufficient large, there always exists
\begin{equation}
-B+j\delta
\le
f(\x_{\m})
\leq
-B+(j+1)\delta,
\end{equation}
for some $j$. Thus, we establish the one-to-one mapping between the grid $\m$ and the $j$-th quantization level. For convenience, we refer to it as $j_\m$. Our approximation is to use a piecewise constant function $\bar{f}(\x)$ over $Q_\m$ to approximate the target function $f$. In each grid $Q_\m$, the constant is the bottom value of the quantization interval. Mathematically, we have
\begin{equation}
\bar{f}(\x)
:=
-B+ j_{\mathbf m} \delta
\end{equation} for $\forall \x \in Q_\m$. 
Then, for every $x\in Q_{\mathbf m}$,
\begin{equation}
\label{eq:main1-proof-idea-local-error}
|\bar{f}(\x)-f(\x)|
\le
|\bar{f}(\x)-f(\x_{\m})|
+
|f(\x_{\m})-f(\x)|
\le
\delta+ A/M.
\end{equation}
See Figure~\ref{fig:CodeAssignment} for our partition of the input domain and the quantization for the function value of $f$.


\begin{figure}
    \centering
    \includegraphics[width=0.88\linewidth]{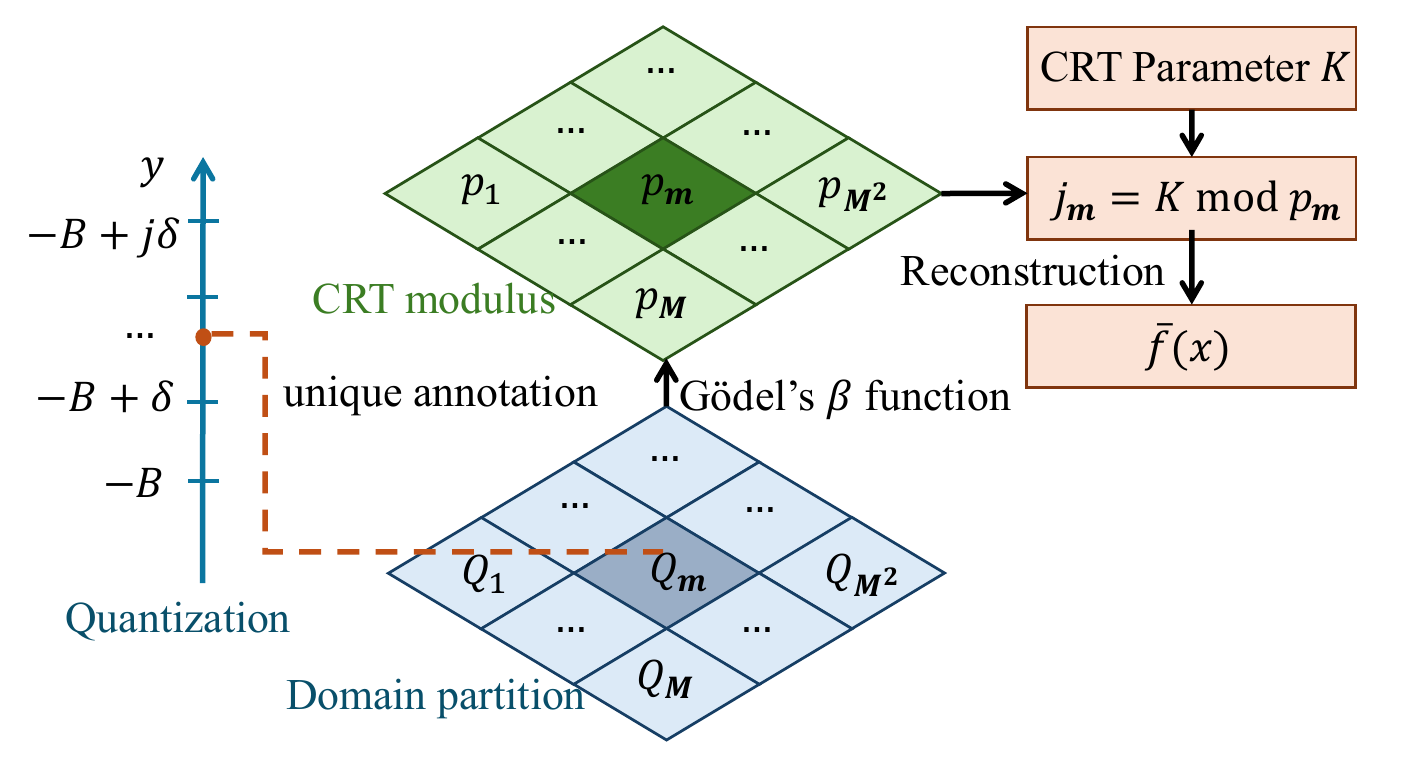}
    \caption{Illustration of Theorem \ref{thm:main-theorem1} pipeline.}
    \label{fig:CodeAssignment}
\end{figure}

\emph{2. Extract the active grid index off the grid boundary.}
For $\x\in\Omega$, each coordinate $x_d$ lies strictly inside one of the intervals $[m_d/M,(m_d+1)/M]$. We derive the index of the grid where $\x$ belongs by the following equation:
\begin{equation}
    \mathbf{m}=[\rho_{\floor}(M x_1), \rho_{\floor}(M x_2), \cdots, \rho_{\floor}(M x_D)] \in \{0, 1, \cdots, M-1\}^D.
\end{equation}
Then, we generate a unique annotation for each grid based on its associated index:
\begin{equation}
\label{eqn:key}
\Phi_1(\x)
:=
1+\sum_{d=1}^D M^{d-1}\rho_{\floor}(M x_d),
\end{equation}
which is exactly the base-$M$ encoding of the active grid coordinate. The first
sub-network realizes $\Phi_1$.
In the activation model containing $\rho_{\floor}$, this sub-network is implemented by applying $D$ floor units to $M x_1,\ldots,M x_D$, followed by one affine combination.

\emph{3. Construct a group of modulus by CRT.}
Now, we assign a modulo for each grid based on its unique annotation. Set
\begin{equation}
c:=(J+1)M^{D}!,
\qquad
p(\x):=1+\Phi_1(\x)c,
\end{equation}
then $p(\x)>J$, and the integers $p(\x)$ and $p(\x')$ are pairwise coprime if $\x$ and $\x'$ are from two different grids. The coprimality of $p(\x)$ is formally confirmed by Lemma~\ref{lem:godel-crt-moduli-coprime}. Then by CRT, for $\x$ located at $Q_\m$, there exists an integer $K$ such that
\begin{equation}
K\equiv j_{\m} \pmod{p(\x)},
\end{equation}
Since $0\le j_\m<J<p(\x)$, the code $j_{\m}$ is exactly the standard residue of
$K \mod p(\x)$. Thus
\begin{equation}
\label{eq:crt-readout-residue}
j_{\m}
=
K\bmod p(\x)
=
K
-
p(\x)
\left\lfloor \frac{K}{p(\x)}\right\rfloor.
\end{equation}

The quotient term in \eqref{eq:crt-readout-residue} is realized by
\begin{equation}
q(\x)
:=
\rho_{\floor}
\!\left(
K\rho_{\inv}(p(\x))
\right).
\end{equation}
Since $K$ is a fixed target-dependent parameter, the factor
$K\rho_{\inv}(p(\x))$ is obtained by affine scaling after the inverse
activation.  Hence
$
q(\x)
=
\left\lfloor \frac{K}{p(\x)}\right\rfloor .
$

It remains to realize the product $p(\x)q(\x)$.  This product can not be produced by an affine layer because both factors are hidden-channel quantities. 
With the ReQU multiplication module $\mathtt{Mult}$,  the CRT module is
\begin{equation}
\label{eq:CRT-module}
\Phi_2(\x)
:=
K
-
\mathtt{Mult}(p(\x),q(\x)).
\end{equation}

\emph{4. Reconstruct the value and control the error.}
Finally, we use a network to represent the following function
\begin{equation}
\Phi_3(j_{\m})
:=
-B+ j_{\m}\delta ,
\end{equation}
which reconstructs the function value from the quantization level. Then 
\begin{equation}
\bar{f}(\x)
=
\Phi_3\circ\Phi_2\circ\Phi_1,
\end{equation}
concludes our construction.
See Figure~\ref{fig:CodeAssignment} for an illustration of the target network.

Based on the aforementioned construction, we can give the width and depth of the network and estimate the parameter bounds based on the error.

\subsection{Proof of Theorem \ref{thm:main-theorem1}}
\label{Sec:Beta-functionTheorem1}

In our proof idea, the moduli \(p(\x)\) must satisfy two requirements such that CRT can be applied to have a fixed-architecture network to realize an arbitrarily small error. First, each modulus \(p(\x)\) must be greater than the number of intervals \(J\). Second, \(p(\x)\) and \(p(\x')\) must be pairwise coprime for $\x$ and $\x'$ are from different grids. G\"odel's \(\beta\)-function provides a classical method for forming coprime numbers \citep{godel1931,smith2013godel}.

\begin{lemma}[Pairwise coprimality of \(p_i\)]
\label{lem:godel-crt-moduli-coprime}
Given \(J,M,D\in\mathbb N_+\), and set
\(
c:=(J+1)(M^D)! .
\)
Define
\begin{equation}
p_i:=1+ic,
\qquad i=1,\ldots,M^D .
\label{beta}
\end{equation}
Then \(p_1,\ldots,p_{M^D}\) are pairwise coprime.  Moreover,
\(p_i>J\) for every \(i=1,\ldots,M^D\).
\end{lemma}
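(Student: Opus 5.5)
The plan is the classical G\"odel $\beta$-function argument. Write $N:=M^D$ and $c=(J+1)N!$. The second claim is immediate. Since $i\ge 1$ and $c\ge (J+1)\cdot 1\ge J+1$, we get
\[
p_i=1+ic\ge 1+c\ge J+2>J .
\]
So the real content is the pairwise coprimality, and I will prove it by contradiction through a common prime divisor.

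Fix indices $1\le i<k\le N$ and suppose a prime $\ell$ divides both $p_i=1+ic$ and $p_k=1+kc$. Then $\ell$ divides the combination
\[
k\,p_i-i\,p_k=k(1+ic)-i(1+kc)=k-i .
\]
The key observation is that $0<k-i<N$, so $k-i$ is a factor in $N!$ and hence $(k-i)\mid N!$. It follows that $\ell\mid N!$, and therefore $\ell\mid (J+1)N!=c$, hence $\ell\mid ic$. But $\ell\mid 1+ic$ as well, so $\ell\mid 1$, which is absurd. Hence $\gcd(p_i,p_k)=1$.

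An alternative route to the same conclusion: $\ell\mid p_i-p_k$ gives $\ell\mid (k-i)c$. Since $\ell\nmid c$ (because $\ell\mid 1+ic$ forces $\gcd(\ell,c)=1$), we get $\ell\mid k-i$, and then $\ell\mid N!\mid c$, which is again a contradiction.

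There is no serious obstacle here. The only points needing care are that $k-i$ ranges over $\{1,\dots,N-1\}$, which is what guarantees it divides $N!$, and that the factor $(J+1)$ in $c$ plays no role in coprimality. That factor matters only for the bound $p_i>J$.
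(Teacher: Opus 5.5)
Your proof is correct and is essentially the paper's G\"odel $\beta$-function argument. Your ``alternative route'' is exactly what the paper does: a common divisor $g$ divides $p_b-p_a=(b-a)c$ and is coprime to $c$, hence divides $b-a$, which divides $(M^D)!\mid c$, forcing $g\mid 1$. Your main route via $k\,p_i-i\,p_k=k-i$ reaches $\ell\mid k-i$ slightly more directly, without the cancellation step, and your bound $p_i\ge 1+c\ge J+2>J$ matches the paper's.
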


\begin{proof}
i) First, since \(c=(J+1)(M^D)!\ge J+1\), we have
\(
p_i=1+ic>J,
\)
for $i=1,\ldots,M^D$. ii) Second, fix \(1\le a<b\le M^D\), and let \(g\in\mathbb N_+\) be a common divisor of \(p_a\) and \(p_b\), which means
\(
g\mid p_a\) and 
\(
g\mid p_b .
\)
Then \(g\) also divides their difference:
\(
g\mid (p_b-p_a) .
\)
Since
\(
p_b-p_a
=
(1+bc)-(1+ac)
=
(b-a)c,
\)
we get
\(
g\mid (b-a)c.
\)
Since \(p_a=1+ac\), any common divisor of \(p_a\) and \(c\) must divide
\(
p_a-ac=1.
\)
Hence \(\gcd(p_a,c)=1\).  
Because \(g\mid p_a\), it follows that
\(
\gcd(g,c)=1.
\)
Together with \(g\mid (b-a)c\), this implies
\(
g\mid (b-a).
\)
Since \(1\le b-a\le M^D-1\), the integer \(b-a\) divides \((M^D)!\). So we have
\(
(b-a)\mid c.
\)
Thus \(g\mid c\).  Together with \(g\mid p_a\), this gives
\(
g\mid (p_a-ac).
\)
Therefore \(g=1\).  Hence \(\gcd(p_a,p_b)=1\).  Since \(a<b\) is arbitrary, \(p_1,\ldots,p_{M^D}\) are pairwise coprime.
\end{proof}

\begin{lemma}[Chinese Remainder Theorem (CRT)]
\label{prop:crt}
Let \(p_1,\dots,p_N\) be pairwise coprime integers, and let
\(b_1,\dots,b_N\in\mathbb Z\).  Then the system of congruences
\begin{equation}
K\equiv b_j ~~\mathmod ~{p_j},
\qquad j=1,\dots,N,
\end{equation}
has a solution \(K\in\mathbb Z\).  Moreover, the solution is unique modulo \(
\prod_{j=1}^{N} p_j.
\)
Equivalently, there exists a unique residue class \(
[K]\in \mathbb Z/P\mathbb Z
\)
satisfying all the congruences, and one may choose its representative so that \(
0\le K<\prod_{j=1}^{N} p_j.
\)
\label{CRT}
\end{lemma}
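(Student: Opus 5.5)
We prove the statement in two parts, existence and then uniqueness, both by induction or by direct construction over the finite family of moduli. Throughout, the moduli are assumed nonzero; without loss of generality they are positive, since replacing \(p_j\) by \(|p_j|\) changes neither the congruences nor coprimality. Set \(P:=\prod_{j=1}^N p_j\).

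\emph{Existence.} For each \(j\), put \(P_j:=P/p_j=\prod_{i\ne j}p_i\).

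First we show \(\gcd(p_j,P_j)=1\). Suppose a prime \(\ell\) divides both \(p_j\) and \(P_j\). By Euclid's lemma, \(\ell\) divides some \(p_i\) with \(i\ne j\). This contradicts \(\gcd(p_i,p_j)=1\).

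By B\'ezout's identity (the extended Euclidean algorithm), there are integers \(u_j,v_j\) with \(u_jP_j+v_jp_j=1\). Hence \(e_j:=u_jP_j\) satisfies
\[
e_j\equiv 1 \pmod{p_j}, \qquad e_j\equiv 0 \pmod{p_i}\ \text{for } i\ne j,
\]
where the second congruence holds because \(p_i\mid P_j\). Define \(K:=\sum_{j=1}^N b_je_j\). Reducing modulo \(p_j\), every term except the \(j\)-th vanishes, so \(K\equiv b_j \pmod{p_j}\) for each \(j\).

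\emph{Uniqueness modulo \(P\).} Let \(K\) and \(K'\) both solve the system. Then \(p_j\mid K-K'\) for every \(j\).

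We claim that pairwise coprime divisors of an integer have their product dividing it. We prove this by induction on \(N\). Assume \(\prod_{j<N}p_j\mid n\), so \(n=m\prod_{j<N}p_j\). Also \(p_N\mid n\), and \(\gcd\bigl(p_N,\prod_{j<N}p_j\bigr)=1\) by the prime argument above. Therefore \(p_N\mid m\) by Euclid's lemma, and so \(\prod_{j\le N}p_j\mid n\).

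Applying the claim to \(n=K-K'\) gives \(P\mid K-K'\). Conversely, any \(K+tP\) with \(t\in\mathbb Z\) is again a solution. So the solution set is exactly one residue class \([K]\in\mathbb Z/P\mathbb Z\).

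\emph{Representative.} Taking \(K \bmod P\), i.e.\ the unique representative of the class in \([0,P)\), gives the stated normalization \(0\le K<P\).

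The only step that needs real care is the coprimality of \(p_j\) with the product of the other moduli, which is used in both existence and uniqueness. It follows from Euclid's lemma applied to a common prime factor. Everything else is routine.
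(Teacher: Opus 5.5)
Your proof is correct and complete. The B\'ezout idempotents $e_j=u_jP_j$ give existence, and the inductive ``pairwise coprime divisors have product dividing $n$'' argument gives uniqueness modulo $P$. Reducing modulo $P$ then gives the normalized representative.

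The paper does not prove this lemma. It quotes it as the classical Chinese Remainder Theorem and uses it directly in the proofs of Theorem~\ref{thm:main-theorem1} and Lemma~\ref{lem:lemma2}, so there is no argument of the paper's to compare yours against; you are supplying a step the paper takes for granted.

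Two small remarks:
\begin{itemize}
\item Your restriction to positive moduli is exactly what makes the normalization $0\le K<\prod_j p_j$ meaningful. This matches the paper's usage, where the moduli have the form $1+ic$ and are positive.
\item Your explicit formula $K=\sum_j b_ju_jP_j$ fits the paper's emphasis on explicit constructions. However, the bound $0\le K<\prod_j p_j$, which the paper uses to control the parameter magnitude $\mathcal P$, comes from the reduced representative $K \bmod P$. The raw sum does not give it, since its size depends on the B\'ezout coefficients $u_j$.
\end{itemize}
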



\begin{proof}[Theorem \ref{thm:main-theorem1}]
Combining Lemma \ref{lem:godel-crt-moduli-coprime} and Proposition \ref{CRT}, it can be seen that Eq. \eqref{eq:crt-readout-residue} holds true.
The moduli \(p_1,\ldots,p_{M^D}\) are pairwise coprime.  Hence, there exists an integer \(K\) such that
\(
K\equiv j_\m \pmod{p_\m}.
\)
Since \(p_\m>J\) and \(j_\m\in\{0,\ldots,J-1\}\), we have
\(
0\le j_\m<J<p_\m .
\)
Therefore reducing \(K\) modulo \(p_\m\) recovers \(j_\m\) exactly:
\(
j_\m = K\bmod p_\m.
\)
Because \(p(\x)=p_\m\) on \(Q_\m\), 
the sub-network \(\Phi_2\) generates \(j_\m\) exactly.
Then through \(\Phi_3\), we can reconstruct the function value \(\Phi_3\circ\Phi_2\circ\Phi_1(\x) = \bar{f}(\x)\) from the quantization level. 

Now, we estimate the relationship between the error and the magnitude of the parameters. Since there are parameters in all $\Phi_1, \Phi_2, \Phi_3$ dependent on $\epsilon$, we provide their relationship, respectively.  

\underline{Parameters in $\Phi_1$}: By the local reconstruction estimate Eq.~\eqref{eq:main1-proof-idea-local-error},
\begin{equation}
|\bar{f}(\x)-f(\x)|\le\delta+A/M,\qquad\x\in\Omega .
\end{equation}
Since $f\in\operatorname{Lip}_A(\Omega)$, choosing $M\ge 2A/\varepsilon$ and $J\ge 4B/\varepsilon$ gives
$\delta=2B/J\le\varepsilon/2$. Hence,
\begin{equation}
\label{eq:error estimate}
|\bar{f}(\x)-f(\x)|\le A/M+\delta\le\varepsilon,\qquad\x\in\Omega .
\end{equation}
Fix $M_\varepsilon=\lceil 2A/\varepsilon\rceil$ and $J_\varepsilon=\lceil 4B/\varepsilon\rceil$. The
module $\Phi_1$ uses the grid scale $M_\varepsilon$ and the weights $1,M_\varepsilon,\ldots,M_\varepsilon^{D-1}$,
so
\begin{equation}
\mathcal P(\Phi_1)\le M_\varepsilon^D .
\end{equation}

\underline{Parameters in $\Phi_2$}:
The parameters of $\Phi_2$ are $c$ in Eq.~\eqref{beta} and the CRT integer $K$. The dominant one is
$K$, bounded by the modulus product: the least nonnegative solution satisfies
\begin{equation}
0\le K \le \prod_{i=1}^{M_\varepsilon^D}p_i -1
=\prod_{i=1}^{M_\varepsilon^D}\Bigl(1+i\,(J_\varepsilon+1)(M_\varepsilon^D)!\Bigr) -1.
\end{equation}
Thus 
\begin{equation}
    \mathcal P(\Phi_2)\le  \prod_{i=1}^{M_\varepsilon^D}\Bigl(1+i\,(J_\varepsilon+1)(M_\varepsilon^D)!\Bigr).
\end{equation}


We count the width and depth layer-wise over the entire composed network. In each layer, the width equals the number of neurons, including identity channels that carry a value forward for use in a later layer.
Following the standard approximation literature \citep{yarotsky2017error, shen2021floorrelu},
Each hidden layer consisting of affine transformations and element-wise activations is counted as one layer. 
The detailed calculation appears in Table~\ref{tab:width-depth-accounting}.

\underline{Width:}
In $\Phi_1$, the first layer's affine map scales the input to $M\x$, and its activation applies $\rho_{\floor}$ to produce $\lfloor M x_d \rfloor$ ($d=1,\dots,D$). Since no later module requires the original coordinates $x_d$, no carry channels are needed, giving this layer a width of $D$. 

In $\Phi_2^{(1)}$ and $\Phi_2^{(2)}$, the second layer's affine map linearly combines the $D$ floor outputs to directly compute the modulus $p(\mathbf{x})=1+c\,\Phi_1(\mathbf{x})$. Then, its activation applies $\rho_{\mathrm{inv}}$ to produce $1/p(\mathbf{x})$, while applying $\rho_1$ to carry $p(\mathbf{x})$ forward. This layer has the width of $2$.

In $\Phi_2^{(3)}$, the third layer's affine map scales $1/p(\mathbf{x})$ by the fixed target-dependent constant $K$ to obtain $K/p(\mathbf{x})$ while still carrying $p(\mathbf{x})$. Its activation applies $\rho_{\floor}$ to produce $q(\mathbf{x})=\lfloor K/p(\mathbf{x})\rfloor$ and $\rho_1$ to carry $p(\mathbf{x})$. This layer has the width of $2$.

In $\Phi_2^{(4)}$, the fourth layer constructs the components for $\mathtt{Mult}(p(\mathbf{x}),q(\mathbf{x}))$. Its affine map forms the four linear combinations $p+q$, $-p-q$, $p-q$, $q-p$ (width $4$). Then, its activation applies $\rho_2$ (RePU) to each component. Hence, this layer has width $4$.

In $\Phi_3$, the fifth (output) layer is a pure affine map. It first combines the $\rho_2$ outputs to evaluate $\Phi_2(\mathbf{x}) = K - \mathtt{Mult}(p,q)$, and simultaneously scales it by $\delta$ and subtracts $B$ to reconstruct $-B+j_{\mathbf{m}}\delta$. This outputs a scalar, giving a width of $1$.

The maximum width across all layers is therefore $\max\{D,\,4\}$.

\underline{Depth:}
By strictly grouping one affine map and one activation into a single layer, the network comprises $1$ layer for $\Phi_1$, $3$ layers for $\Phi_2$, and $1$ output layer for $\Phi_3$ (which absorbs the final affine summation of $\Phi_2$), giving a total depth of $5$.

\end{proof}

\begin{table}[htbp]
\centering
\footnotesize
\renewcommand{\arraystretch}{1.4}
\setlength{\tabcolsep}{5pt}
\begin{tabular}{c|c|p{0.42\textwidth}|c|c}
\toprule
Layer & Module & Operation & $\mathcal P_\ell$ bound & Width \\
\midrule
\rowcolor{phi11bg}
$1$ & $\Phi_1$
 & $\bigl(\rho_{\floor}(Mx_d)\bigr)_{d=1}^{D}$
 & $M_\varepsilon$
 &\cellcolor{widthHL}$D$ \\
\rowcolor{phi12bg}
$2$ & $\Phi_2^{(1)},\Phi_2^{(2)}$
 & $\bigl(\rho_{\inv}(p),\ \rho_1(p)\bigr)$
 & $1+c\,M_\varepsilon^{D-1}$
 & $2$ \\
\rowcolor{phi12bg}
$3$ & $\Phi_2^{(3)}$
 & $\bigl(\rho_{\floor}(\hp{K} \rho_{\inv}(p)),\ \rho_1(p)\bigr)$
 &  \cellcolor{paramHL}$\displaystyle\prod_{i=1}^{M_\varepsilon^{D}}p_i$
 & $2$ \\
\rowcolor{phi12bg}
$4$ & $\Phi_2^{(4)}$
 & $\bigl(\rho_2(p{+}q),\ \rho_2({-}p{-}q),\ \rho_2(p{-}q),\ \rho_2(q{-}p)\bigr)$
 & $1$
 &  \cellcolor{widthHL}$4$ \\
\rowcolor{phi13bg}
$5$ & $\Phi_2^{(4)},\Phi_3$
 & $-B+\delta\bigl(K-\mathtt{Mult}(p,q)\bigr)$ \
 & $\delta K+B$
 & $1$ \\
\bottomrule
\end{tabular}
\caption{Layer-wise  parameter magnitude and width-depth accounting for the continuous target construction.}
\label{tab:width-depth-accounting}
\end{table}


%% file: chapters/04_Theorem2.tex
\section{Approximation of H\"older-Smooth Functions}
\label{sec:section4}

In this part, we give a super-expressive approximation of H\"older-smooth targets, which demonstrates that smoother targets make parameters grow slower as the approximation error decreases. The key of the construction is that any H\"older-smooth targets can be approximated by a gridwise Taylor polynomial function of rational coefficients.

\begin{maintheorem}[H\"older-smooth function]
\label{thm:main-theorem2}
Let $f\in C^{r,\gamma}_A(\Omega)$ with $r\in\mathbb N$ and $\gamma\in(0,1]$. For every $\varepsilon\in(0,1)$, there exists a network $\Psi_{\varepsilon}$ with activation $\rho_{\floor}, \rho_{1}, \rho_{2}$ and $\rho_{\inv}$ as activation functions such that 
\begin{equation}
    \bigl|\Psi_{\varepsilon}(\x)-f(\x)\bigr|\le\varepsilon\qquad(\x\in\Omega),
\end{equation}
where the width is $\max\{2D,\, D+5N+1\}$ and the depth is $r+9$. Furthermore, the parameter magnitude of $\Psi_{\varepsilon}$ satisfies
\begin{equation}
\label{eq:corollary-holder-parameter-bound}
\log_2\mathcal P(\Psi_{\varepsilon})\le C_{D,r,\gamma,A,B}\,\varepsilon^{-2D/(r+\gamma)}\log_2(1/\varepsilon),
\end{equation}
with leading coefficient
\[
C_{D,r,\gamma,A,B}=2(1+\kappa)^{2D}\Bigl[N\bigl(2+2\log_2(2N)+\log_2(B+1)\bigr)+\tfrac{D}{r+\gamma}+D\log_2(1+\kappa)\Bigr],
\]
where
$$
\begin{aligned}
N &:= \binom{D+r}{r}, \qquad
\kappa := 
\left(
\frac{2D^{r}\Gamma(\gamma+1)}
     {\Gamma(r+\gamma+1)}
A
\right)^{1/(r+\gamma)}, \text{and}\\
B &:= 
\max_{0\le k\le r}\ 
\sup_{\x\in\Omega,\ g\in\partial^{k}f}
|g(\x)|.
\end{aligned}
$$
\end{maintheorem}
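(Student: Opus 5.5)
The approach mirrors the proof of Theorem~\ref{thm:main-theorem1}, with the piecewise constant surrogate replaced by a gridwise Taylor polynomial with rational coefficients (Definition~\ref{def:rgpp}). The per-cell data become the $N=\binom{D+r}{r}$ quantized Taylor coefficients instead of a single quantized value. Each coefficient family is encoded by its own CRT integer $K_{\alpha}$, using the moduli $p_i=1+ic$ of Lemma~\ref{lem:godel-crt-moduli-coprime}.

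\textbf{Step 1 (surrogate and error budget).} Partition $\Omega$ into $M^D$ cells $\Omega_{\m}$ with corners $\x_{\m}=\m/M$. On each cell, consider the Taylor polynomial
\[
T_{\m}(\x)=\sum_{|\balpha|\le r}\frac{\partial^{\balpha}f(\x_{\m})}{\balpha!}(\x-\x_{\m})^{\balpha}.
\]
By the integral form of the remainder and the H\"older condition on $\partial^r f$, the remainder is at most $\frac{D^r\Gamma(\gamma+1)}{\Gamma(r+\gamma+1)}A\,M^{-(r+\gamma)}$. This constant is what produces $\kappa$. Choosing $M=\lceil \kappa\varepsilon^{-1/(r+\gamma)}\rceil$ makes the remainder at most $\varepsilon/2$. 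Then write $T_{\m}(\x)=\sum_{\balpha} a_{\m,\balpha}(M\x-\m)^{\balpha}$ with $a_{\m,\balpha}=\partial^{\balpha}f(\x_{\m})M^{-|\balpha|}/\balpha!$, so that $|a_{\m,\balpha}|\le B$. Since $|(M\x-\m)^{\balpha}|\le 1$ on $\Omega_{\m}$, quantizing each $a_{\m,\balpha}$ to a grid of step $\delta=\varepsilon/(2N)$ costs at most $N\delta=\varepsilon/2$. This yields integer codes $j_{\m,\balpha}\in\{0,\dots,J-1\}$ with $J\approx 4NB/\varepsilon$, hence a rational gridwise polynomial within $\varepsilon$ of $f$.

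\textbf{Step 2 (network).} The construction proceeds as follows.
\begin{enumerate}
\item As in $\Phi_1$, compute the floors $\lfloor Mx_d\rfloor$. Also carry the local coordinates $t_d=Mx_d-\lfloor Mx_d\rfloor$ through $\rho_1$ after a shift, since $t_d\in[0,1]$. This layer has width $2D$.
\item Form the single modulus $p(\x)=1+c\Phi_1(\x)$ and apply $\rho_{\inv}$, carrying $p$ along.
\item For each of the $N$ multi-indices in parallel, compute $q_{\balpha}=\rho_{\floor}(K_{\balpha}\rho_{\inv}(p))$.
\item Recover $j_{\balpha}=K_{\balpha}-\mathtt{Mult}(p,q_{\balpha})$ through four $\rho_2$ units each, then map affinely to $\tilde a_{\balpha}=-B+j_{\balpha}\delta$. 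This accounts for roughly $5N$ channels plus the $D$ carried $t_d$ and one modulus channel, giving width $D+5N+1$.
\item Evaluate $\sum_{\balpha}\tilde a_{\balpha}\,\mathbf t^{\balpha}$ exactly with $\mathtt{Mult}$ gates in a Horner/monomial-building scheme, which needs about $r$ successive multiplication layers.
\end{enumerate}
Together with the constant number of CRT layers, this gives depth $r+9$. I will tabulate the widths and depths layer by layer as in Table~\ref{tab:width-depth-accounting}.

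\textbf{Step 3 (parameter bound).} The dominant parameters are the $K_{\balpha}\le\prod_{i=1}^{M^D}(1+i(J+1)(M^D)!)$. Taking logarithms,
\[
\log_2 K\le M^D\bigl(\log_2(M^D)+\log_2(J+2)+\log_2((M^D)!)\bigr)\lesssim M^{2D}\log_2 M+M^D\log_2 J .
\]
Substituting $M\le(1+\kappa)\varepsilon^{-1/(r+\gamma)}$ and $J\lesssim 4NB/\varepsilon$ gives $\varepsilon^{-2D/(r+\gamma)}\log(1/\varepsilon)$ times a constant. Matching that constant exactly to $C_{D,r,\gamma,A,B}$ requires careful bookkeeping: use $\lceil y\rceil\le(1+\kappa)y$-type bounds valid for $\varepsilon<1$, and bound $\log_2((M^D)!)\le M^D\log_2 M^D$.

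\textbf{Main obstacle.} I expect the main difficulty to be the exact evaluation of the polynomial in $\mathbf t$ within depth $r+9$ and width $D+5N+1$, while the codes are only recovered after the CRT stage. The plan is to compute all monomials $\mathbf t^{\balpha}$ with $|\balpha|\le r$ in parallel with the CRT layers, reusing the depth, so that a final $\mathtt{Mult}$ layer pairs each $\tilde a_{\balpha}$ with its monomial. This needs careful channel accounting. The secondary difficulty is reproducing the precise leading coefficient in the parameter bound.
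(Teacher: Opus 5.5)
Your argument is correct in substance, and it shares the paper's Taylor step (same remainder constant, same $M_\varepsilon$ and $\kappa$) and G\"odel-type moduli. Where it differs is in how the coefficients are encoded. The paper rounds each Taylor coefficient dyadically (Lemma~\ref{lem:bitlength}) and then calls the general Lemma~\ref{lem:lemma2}. That lemma clears denominators with a common denominator $B_f$ (crudely bounded by $2^{FM^DN}$) and encodes the \emph{signed} numerators with moduli built from $\Gamma_f=(1+2C_f)(M^D)!$. It then needs a second decoding stage (Lemmas~\ref{lem:stlRange} and~\ref{lem:recover-Sti}: another $\rho_{\inv}$, a floor and a $\mathtt{Mult}$) to turn residues in $[0,C_f]\cup[p_{\m}-C_f,p_{\m})$ back into signed integers; that stage accounts for three of the $r+9$ layers. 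You instead quantize onto the offset grid $-B+\delta\mathbb Z$, so the data are nonnegative codes $j_{\m,\balpha}\le J$, and the decoder of Theorem~\ref{thm:main-theorem1} plus an affine map suffices.

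This gives a leaner network: depth $r+6$ and peak width $\max\{2D,D+5N\}$, padded up to the stated sizes, e.g.\ by $\rho_1$ identity layers on the nonnegative input. It also gives smaller moduli: $\log_2\mathcal P\le M^D(M^D+1)D\log_2 M+M^D\bigl(1+\log_2(J+1)\bigr)$. This has the same order $\varepsilon^{-2D/(r+\gamma)}\log(1/\varepsilon)$, driven by $(M^D)!$, but lacks the $M^{2D}NF$ term that $B_f$ produces. In exchange, the paper's route yields Lemma~\ref{lem:lemma2} as a standalone exact-realization result for arbitrary rational gridwise polynomials (unrelated denominators, arbitrary signs). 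Your encoding relies on all coefficients lying on one arithmetic progression, which is all the theorem needs; indeed nothing in your construction requires the surrogate to be rational.

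Three remarks. First, your ``main obstacle'' is not one. Run the $N$ monomial chains \emph{after} decoding, carrying the nonnegative codes $j_{\balpha}$ (not the possibly negative $-B+j_{\balpha}\delta$) and $t_d\ge 0$ through $\rho_1$; this costs $4N+N+D$ units per layer. It is the concurrent scheme you propose, with monomial products sharing layers with $\mathtt{Mult}(p,q_{\balpha})$, that would endanger the width bound.

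Second, your bookkeeping falls below $C_{D,r,\gamma,A,B}\,\varepsilon^{-2D/(r+\gamma)}\log_2(1/\varepsilon)$ once $\log_2(1/\varepsilon)\ge 1$. However, no bookkeeping can give the stated inequality as $\varepsilon\to 1$: the right-hand side tends to $0$, while the bias $1+c\ge 2$ forces $\log_2\mathcal P\ge 1$. The paper's ``collecting the coefficient'' step has the same defect, so the bound should be read for, say, $\varepsilon\le 1/2$.

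Third, as in the paper, $\lfloor M\cdot 1\rfloor=M$ sends the face $x_d=1$ outside $\{0,\dots,M-1\}$ and corrupts the grid index. Clamp with $m_d=(M-1)-\rho_1\bigl(M-1-\lfloor Mx_d\rfloor\bigr)$ and set $t_d=Mx_d-m_d$. This costs one extra layer of width $2D$, which fits in your depth slack.
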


To give the proof, we first introduce two auxiliary lemmas. Lemma \ref{lem:lemma2} quantifies how the CRT parameter must grow as the accuracy increases, and Lemma \ref{lem:bitlength} guarantees that the Taylor coefficients can be rationalized to finite bit length without introducing approximation error.

\begin{lemma}[Gridwise polynomial with rational coefficients]
\label{lem:lemma2}
Let $\bigcup_{m=0}^{M^D-1}\Omega_{\m} $ be a uniform partition of the domain $[0,1]^D$, where $M$ is the number of intervals along each coordinate. Suppose $f:\Omega  \to\mathbb R$ is a gridwise polynomial with rational coefficients of degree $r$, then there exists a network $\Psi$ with activation $\rho_{\floor}$, $\rho_1$, $\rho_2$ and $\rho_{\inv}$, where the width is $\max\{2D,\, D+5N+1\}$ and the depth is $r+9$, such that
\begin{equation}
\label{eq:main-piecewise-polynomial-exact}
\Psi(\x)=f(\x).
\end{equation}
Furthermore, the parameter magnitude of $\Psi$ satisfies
\begin{equation}
\label{eq:main-piecewise-polynomial-parameter-bound}
\log_2 \mathcal P(\Psi)\le
M^D[(M^D+1)\,D \log_2 M\;+\;F\,(M^D N+1)\;+\;3] + 1,
\end{equation}
where $F$ is the maximum bit length of coefficients of $f$ in lowest terms, and $N := \binom{D+r}{r}$.
\end{lemma}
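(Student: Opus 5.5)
The plan is to extend the CRT readout of Theorem~\ref{thm:main-theorem1} from one quantization level per grid to the $N$ rational coefficients of the local polynomial, and then to evaluate the polynomial exactly with $\mathtt{Mult}$.

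\emph{Step 1: grid index and local coordinates.} As in $\Phi_1$, one layer computes $\rho_{\floor}(Mx_d)$ for $d=1,\ldots,D$. In parallel it carries $x_d$ through $\rho_1$, which is why the width is $2D$. We then form the local variable $\mathbf t=M\x-\m\in[0,1]^D$ and the index $\Phi_1(\x)=1+\sum_d M^{d-1}m_d$. On the boundary face $m_d=M-1$ with $x_d=1$ the floor returns $M$. We fix this by applying the floor to $\min(Mx_d,M-1)$, which one extra $\rho_1$ layer realizes as $Mx_d-\rho_1(Mx_d-M+1)$.

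\emph{Step 2: encoding the coefficients.} On each grid, write every coefficient over a common denominator, $c_{\balpha,\m}=s_{\balpha,\m}/g_\m$. Shift the numerators so they are nonnegative: $0\le s_{\balpha,\m}+S<2^{F'}$, where $F'$ is linear in $F$ and $N$. Concatenate these shifted numerators into a single integer $b_\m<2^{NF'}$, and do the same for $g_\m$, or else take a global common denominator $G$ if that keeps the bounds. Choose $c=(2^{NF'}+1)(M^D)!$ and $p_i=1+ic$. By Lemma~\ref{lem:godel-crt-moduli-coprime} these moduli are pairwise coprime and exceed $b_\m$. Lemma~\ref{prop:crt} then gives $K<\prod_i p_i$ with $K\equiv b_\m \pmod{p_\m}$, and the three layers of $\Phi_2$ recover $b_\m$ exactly.

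\emph{Step 3: decoding and evaluation.} We unpack $b_\m$ into its $N$ digits by repeated floor operations, $\rho_{\floor}(b\,2^{-kF'})$, done in parallel across $k$ as differences of floors. This needs roughly $N$ channels for the floor outputs and $N$ for the carries. Next we compute the monomials $\mathbf t^{\balpha}$ with $|\balpha|\le r$ by successive $\mathtt{Mult}$ layers, raising the degree by one per layer. That takes $r$ layers and about $4N$ $\rho_2$-channels, plus the $D$ carried $t_d$ and the digits; this is where the count $D+5N+1$ comes from. Finally one more $\mathtt{Mult}$ layer forms $\sum_{\balpha} s_{\balpha,\m}\mathbf t^{\balpha}$, and a last layer divides by $g_\m$ using $\rho_{\inv}$ followed by $\mathtt{Mult}$. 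Note that $\rho_{\inv}(g)=1/g$ is exact because $g\ge1$. Totting up the layers (index $1$--$2$, CRT $3$, decoding $1$--$2$, monomials $r$, combination and division $2$) should land at $r+9$; the exact assignment we will fix by a table like Table~\ref{tab:width-depth-accounting}.

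\emph{Step 4: parameter bound.} The dominant parameter is again $K\le\prod_{i=1}^{M^D}(1+ic)$. Its logarithm satisfies
\[
\log_2 K\le M^D\bigl[\log_2(M^D\cdot c)+1\bigr],\qquad \log_2 c\le NF'+1+M^D D\log_2 M,
\]
using $(M^D)!\le (M^D)^{M^D}$. This gives the stated form $M^D[(M^D+1)D\log_2M+F(M^DN+1)+3]+1$ once $F'$ is chosen so the constants fit.

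\emph{Main obstacle.} I expect the hardest part to be the coefficient packing: getting exact digit extraction and common-denominator handling within the stated width $D+5N+1$ and depth $r+9$, while keeping the bit count per grid at about $M^DNF$. That count suggests the intended common denominator is taken across grids, so that $F$ is multiplied by $M^D N$. My first attempt would be to put every coefficient over one global denominator of size $2^{FM^DN}$, so that only numerators need to be packed, and to handle the sign by an offset subtracted after decoding.
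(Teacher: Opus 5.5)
Your exactness argument is a workable alternative: one packed CRT word per grid, digit extraction by floors, and an offset in place of the paper's signed-residue recovery. You are also right that $\rho_{\floor}(M\cdot 1)=M$ corrupts the index and the local coordinate on the faces $x_d=1$, a point the paper's proof passes over. The genuine gap is Step~4: packing cannot give the stated parameter bound. Once all $N$ numerators of a grid are concatenated into one integer $b_\m$, every modulus must exceed $2^{NF'}$, where $F'$ is the width of a single field. The $F$-dependent part of $\log_2 p_{\max}$ is then about $NF'$. The statement only allows $F(M^DN+1)+O(1)$, which is the size of \emph{one} numerator over the global common denominator; that is what $C_f\le 2^{F(M^DN+1)}$ measures. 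With your preferred global denominator, $F'\approx F(M^DN+1)$, and your own estimate becomes $M^D[(M^D+1)D\log_2 M+NF(M^DN+1)+O(N)]$. This is off by a factor $N$ in the dominant term, and no tuning of constants removes it. It also exceeds your own budget of ``about $M^DNF$ bits per grid'', because packing makes the word size the \emph{sum} over the $N$ slots rather than the maximum. Per-grid denominators shrink $F'$ to about $FN$, but the packed word (including $g_\m$) then has about $FN(N+1)$ bits. That fits under $F(M^DN+1)$ only when $M^D$ is at least about $N+1$; for small $M$ (e.g.\ $M=1$, $N\ge 2$, $F$ large) your $K$ can exceed the claimed bound.

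The missing idea is the paper's coordinate-wise CRT. Keep one global denominator $B_f$, and use $N$ separate integers $K_i\equiv s_{\m,i}\pmod{p_\m}$, $i=1,\dots,N$, all sharing the moduli $p_\m=1+\Psi^{(1)}_1(\x)\,(1+2C_f)(M^D)!$. Each modulus then only has to exceed the range of one numerator. This gives $\log_2 p_{\max}\le (M^D+1)D\log_2 M+F(M^DN+1)+3$, and $K_i<\prod_\m p_\m\le p_{\max}^{M^D}$ yields exactly the stated bound, with the final $+1$ absorbing the smaller parameters. The $N$ readouts run in parallel, each with one floor unit and four $\rho_2$ units; this is where the $5N$ in the width actually comes from, and your digit decoder becomes unnecessary. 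Your offset idea still pays off inside this scheme. Encoding $s_{\m,i}+C_f\in[0,2C_f]$ instead of $s_{\m,i}$ keeps the requirement $p_\m>2C_f$. It also replaces the second reciprocal, floor and $\mathtt{Mult}$ of Lemma~\ref{lem:recover-Sti} by a single affine shift.
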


The detailed proof of Lemma \ref{lem:lemma2} can be found in Section \ref{sec:lemma2}.
Lemma~\ref{lem:lemma2} shows that rational gridwise polynomials can be exactly realized in a fixed architecture. This is significant for two reasons. First, the CRT provides an exact integer encoding of all gridwise data, with rationality ensuring that polynomial coefficients can be represented as integers. Second, the network recovers a full local polynomial on each grid, not just a constant label. This is achieved without growing the architecture with the target  information stored entirely in the CRT congruence data.

\begin{lemma}[Uniform low-bit rational approximation]
\label{lem:bitlength}
For every $\hat c\in\mathbb R$ with $|\hat c|\le L$ and $\forall 0<\eta< \frac{1}{2}$, there exists a rational number $c=s/g$ in lowest terms with $|\hat c - c| \le \eta$, satisfying
\begin{equation}
\operatorname{bits}(c)
\le
2\left\lceil \log_2\frac1\eta\right\rceil+\log_2(L+2)+2.
\end{equation}
\end{lemma}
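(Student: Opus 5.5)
The plan is to approximate $\hat c$ by a dyadic rational with denominator $2^k$, reduce it to lowest terms, and bound the two parts of $\mathtt{bits}(c)$ in Definition~\ref{def:rational-bit-length} separately. Reducing to lowest terms can only shrink the numerator and denominator, so the bound for the unreduced pair will transfer to $(s(c),g(c))$.

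\emph{Construction.} Set $k:=\lceil \log_2(1/\eta)\rceil$. Since $0<\eta<\tfrac12$, we have $1/\eta>2$, hence $k\ge 2$ and $2^{-k}\le\eta$. Let $s_0\in\mathbb Z$ be an integer nearest to $2^k\hat c$, so that $|2^k\hat c-s_0|\le \tfrac12$. Define $c:=s_0/2^k$. Then
\[
|\hat c-c|\le 2^{-k-1}\le \eta/2\le\eta ,
\]
which gives the accuracy requirement. Write $c=s/g$ in lowest terms, using the convention $s=0$, $g=1$ if $s_0=0$. Since $g$ divides $2^k$ and $|s|\le |s_0|$, we have $1\le g\le 2^k$ and $|s|\le |s_0|$.

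\emph{Bit count.}
\begin{itemize}
\item \emph{Denominator.} From $g\le 2^k$ we get $g+1\le 2^k+1\le 2^{k+1}$, so $\lceil\log_2(g+1)\rceil\le k+1$.
\item \emph{Numerator.} From $|\hat c|\le L$ we get $|s|\le |s_0|\le 2^kL+\tfrac12$. Hence
\[
|s|+2\le 2^kL+\tfrac52\le 2^k(L+2),
\]
where the last step uses $2^{k+1}\ge 8>\tfrac52$. Therefore $\log_2(|s|+2)\le k+\log_2(L+2)$, and so $\lceil\log_2(|s|+2)\rceil\le k+\log_2(L+2)+1$.
\end{itemize}
Adding the two estimates gives
\[
\mathtt{bits}(c)\le 2k+\log_2(L+2)+2=2\left\lceil\log_2\tfrac1\eta\right\rceil+\log_2(L+2)+2 ,
\]
which is the claimed bound.

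\emph{Main obstacle.} There is no real difficulty; the only care needed is bookkeeping with the ceilings. The numerator estimate has to absorb the rounding term $\tfrac12$ and the $+2$ from Definition~\ref{def:rational-bit-length} into the factor $L+2$. This absorption is exactly where $k\ge 1$ (guaranteed here by $\eta<\tfrac12$) is used, and the ceiling on the numerator term then costs the extra $+1$. If nearest-integer rounding were replaced by the floor, the error would become $2^{-k}\le\eta$, which still suffices, and the same numerator bound would hold.
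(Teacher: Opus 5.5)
Your proof is correct and takes essentially the same route as the paper. Both use dyadic nearest-integer rounding at scale $2^{k}$ with $k=\lceil\log_2(1/\eta)\rceil$, reduce to lowest terms so that $g\le 2^k$, and then bound the two ceilings separately as $\lceil\log_2(|s|+2)\rceil\le k+\log_2(L+2)+1$ and $\lceil\log_2(g+1)\rceil\le k+1$. The only differences are cosmetic: you bound $|s|$ through $|s|\le|s_0|$ where the paper uses $|s|=|c|\,g$, and you invoke $k\ge 2$ where $k\ge 1$ already suffices.
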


\begin{proof}
Set
$
q:=\left\lceil \log_2\frac1\eta\right\rceil
$
and define the dyadic rounding
$
c:=2^{-q}\left\lfloor 2^q\hat c+\frac12\right\rfloor\in 2^{-q}\mathbb Z .
$
Since $\lfloor 2^q\hat c+\tfrac12\rfloor$ is a nearest integer to $2^q\hat c$, we have
$
\left|\left\lfloor 2^q\hat c+\frac12\right\rfloor-2^q\hat c\right|
\le \frac12 .
$
Therefore
$
|c-\hat c|
\le 2^{-q-1}
\le \eta .
$

If $\left\lfloor 2^q\hat c+\frac12\right\rfloor=0$, then $c=2^{-q}\left\lfloor 2^q\hat c+\frac12\right\rfloor=0$, and the claimed bounds are immediate. If $\left\lfloor 2^q\hat c+\frac12\right\rfloor\ne0$,  since every divisor of $2^q$ is a power of $2$, we may write $\gcd(|\left\lfloor 2^q\hat c+\frac12\right\rfloor|,2^q)=2^e$, where $0\le e\le q$. Hence, in lowest terms,
\begin{equation}
c=\frac{s}{g},
s=\frac{\left\lfloor 2^q\hat c+\frac12\right\rfloor}{2^e},
g=2^{q-e}.
\end{equation}
In particular,
$
g\le 2^q.
$
Moreover, using $|\hat c|\le L$ and the rounding estimate above,
\begin{equation}
|s|=|c|g
\le \bigl(|\hat c|+2^{-q-1}\bigr)2^q
\le \left(L+\frac12\right)2^q
\le (L+1)2^q .
\end{equation}
Therefore,
\begin{equation}
|s|+2\le (L+1)2^q+2\le (L+2)2^q,
\end{equation}
where we use $q\ge1$. Also,
\begin{equation}
    g+1\le 2^q+1\le 2^{q+1}.
\end{equation}
By the definition of bit length,
$
\left\lceil \log_2(|s|+2)\right\rceil
\le
\log_2(L+2)+q+1,
$
and
$
\left\lceil \log_2(g+1)\right\rceil
\le q+1.
$
Adding two estimates gives
\begin{equation}
\operatorname{bits}(c)
\le
2q+\log_2(L+2)+2
=
2\left\lceil \log_2\frac1\eta\right\rceil+\log_2(L+2)+2.
\end{equation}
This bound depends only on $L$ and $\eta$ instead of the particular value of $\hat c$.
\end{proof}

\begin{proof}[Proof of Theorem~\ref{thm:main-theorem2}]
Fix $\varepsilon\in(0,1)$ and an integer $M\ge1$ to be chosen in Step~1 in Theorem \ref{thm:main-theorem1}, and partition $\Omega=[0,1]^D$ into $M^D$ grids $\Omega_{\m}$ with corners $\x_{\m}=\m/M$, so that on $\Omega_{\m}$ the local coordinate is $M(\x-\x_{\m})\in[0,1]^D$. We construct a rational gridwise  polynomial $\tilde f$ of degree $r$ with $\|f-\tilde f\|_{L^\infty(\Omega)}\le\varepsilon$ and realize it exactly via Lemma~\ref{lem:lemma2}.

\underline{Taylor error estimate.}
Let
\begin{equation}
T_{\m}(\x):=\sum_{|\balpha|\le r}\frac{\partial^{\balpha}f(\x_{\m})}{\balpha!}(\x-\x_{\m})^{\balpha}
=\sum_{i\le N}\hat c_{\m,i}\,\bigl(M(\x-\x_{\m})\bigr)^{\balpha^{(i)}},
\end{equation}
and
\begin{equation}
c_{\m,i}:=\frac{\partial^{\balpha^{(i)}}f(\x_{\m})}{\balpha^{(i)}!\,M^{|\balpha^{(i)}|}}
\end{equation}
be the degree-$r$ Taylor polynomial of $f$ at $\x_{\m}$, using $(\x-\x_{\m})^{\balpha}=M^{-|\balpha|}\bigl(M(\x-\x_{\m})\bigr)^{\balpha}$. For $r\ge1$, put $\phi(t):=f\bigl(\x_{\m}+t(\x-\x_{\m})\bigr)$. Taylor's theorem with integral remainder, with the order-$r$ term moved into $T_{\m}$ via $\tfrac{1}{(r-1)!}\int_0^1(1-t)^{r-1}\,dt=\tfrac1{r!}$, gives
\begin{equation}
f(\x)-T_{\m}(\x)
=\frac{1}{(r-1)!}\int_0^1(1-t)^{r-1}\bigl(\phi^{(r)}(t)-\phi^{(r)}(0)\bigr)\,dt .
\end{equation}
Since $\phi^{(r)}(t)=\sum_{|\balpha|=r}\tfrac{r!}{\balpha!}(\x-\x_{\m})^{\balpha}\,\partial^{\balpha}f\bigl(\x_{\m}+t(\x-\x_{\m})\bigr)$ and $\|\x-\x_{\m}\|_\infty\le1/M$,
\begin{equation}
\bigl|\phi^{(r)}(t)-\phi^{(r)}(0)\bigr|
\le r!\,A\,t^{\gamma}\|\x-\x_{\m}\|_\infty^{\gamma}\sum_{|\balpha|=r}\frac{|(\x-\x_{\m})^{\balpha}|}{\balpha!}
\le r!\,A\,t^{\gamma}M^{-\gamma}\cdot\frac{(D/M)^{r}}{r!},
\end{equation}
where the last step uses $\sum_{|\balpha|=r}\tfrac{|(\x-\x_{\m})^{\balpha}|}{\balpha!}=\tfrac1{r!}\bigl(\sum_d|x_d-x_{\m,d}|\bigr)^{r}\le\tfrac{(D/M)^{r}}{r!}$. With $\int_0^1(1-t)^{r-1}t^{\gamma}\,dt=\tfrac{\Gamma(\gamma+1)\Gamma(r)}{\Gamma(r+\gamma+1)}$,
\begin{equation}
|f(\x)-T_{\m}(\x)|
\le\frac{D^{r}\Gamma(\gamma+1)}{\Gamma(r+\gamma+1)}\,A\,M^{-(r+\gamma)}
=C_{D,r,\gamma}\,A\,M^{-(r+\gamma)} .
\end{equation}
For $r=0$, this is the direct H\"older bound $|f(\x)-f(\x_{\m})|\le A\|\x-\x_{\m}\|_\infty^{\gamma}\le AM^{-\gamma}$, again with $C_{D,0,\gamma}=1$. Choosing
\begin{equation}
M=M_\varepsilon:=\Bigl\lceil(2C_{D,r,\gamma}A)^{1/(r+\gamma)}\,\varepsilon^{-1/(r+\gamma)}\Bigr\rceil
\end{equation}
yields $|f-T_{\m}|\le\varepsilon/2$ on every grid.

\underline{Rationalization.}
Since $|\partial^{\balpha^{(i)}}f(\x_{\m})|\le B$ and $\balpha^{(i)}!,M^{|\balpha^{(i)}|}\ge1$, we have $|\hat c_{\m,i}|\le B$. Apply Lemma~\ref{lem:bitlength} with $L=B$ and $\eta:=\varepsilon/(2N)<\tfrac12$ (valid as $\varepsilon<1\le N$): each $\hat c_{\m,i}$ admits a rational $c_{\m,i}=s_{\m,i}/g_{\m,i}$ in lowest terms with $|c_{\m,i}-\hat c_{\m,i}|\le\eta$ and
\begin{equation}
\mathtt{bits}(c_{\m,i})\le 2\bigl\lceil\log_2(2N/\varepsilon)\bigr\rceil+\log_2(B+2)+2=:F_\varepsilon .
\end{equation}
Define $\tilde f(\x):=\sum_{i\le N}c_{\m,i}\,\bigl(M(\x-\x_{\m})\bigr)^{\balpha^{(i)}}$ on $\Omega_{\m}$. As $0\le\bigl(M(\x-\x_{\m})\bigr)^{\balpha^{(i)}}\le1$,
\begin{equation}
|T_{\m}(\x)-\tilde f(\x)|\le\sum_{i\le N}|\hat c_{\m,i}-c_{\m,i}|\le N\eta=\varepsilon/2,
\end{equation}
hence $|f(\x)-\tilde f(\x)|\le\varepsilon$ for all $\x\in\Omega$.

\underline{Realization by a fixed-size network.}
$\tilde f$ is a gridwise polynomial with rational coefficients of degree $r$ on the uniform $M_\varepsilon^{D}$-partition, so Lemma~\ref{lem:lemma2} furnishes a network $\Psi_\varepsilon$ of the same activations, width $\max\{2D,\,D+5N+1\}$, and depth $r+9$ with $\Psi_\varepsilon=\tilde f$. The architecture depends only on $D,r$ and $N$; hence, it is independent of $\varepsilon$ and $f$, and $|\Psi_\varepsilon(\x)-f(\x)|\le\varepsilon$.

\underline{Parameter magnitude.}
By Lemma~\ref{lem:lemma2} with $M=M_\varepsilon$ and bit length $F=F_\varepsilon$ (for $r\ge1$, $r=0$ follows from the exact bound of Lemma~\ref{lem:lemma2} with $N=1$),
\begin{equation}
\log_2\mathcal P(\Psi_\varepsilon)\le 2M_\varepsilon^{2D}\bigl(D\log_2 M_\varepsilon+N F_\varepsilon\bigr).
\end{equation}
Write $\kappa:=\bigl(\tfrac{2D^{r}\Gamma(\gamma+1)}{\Gamma(r+\gamma+1)}A\bigr)^{1/(r+\gamma)}$. The choice of $M_\varepsilon$ and $\varepsilon<1$ give
\begin{equation}
M_\varepsilon^{2D}\le(1+\kappa)^{2D}\varepsilon^{-2D/(r+\gamma)},\qquad
\log_2 M_\varepsilon\le\log_2(1+\kappa)+\tfrac{1}{r+\gamma}\log_2(1/\varepsilon),
\end{equation}
while $F_\varepsilon\le 2\log_2(1/\varepsilon)+2\log_2(2N)+\log_2(B+1)+O(1)$. Substituting and collecting the coefficient of $\varepsilon^{-2D/(r+\gamma)}\log_2(1/\varepsilon)$ yields
\begin{equation}
\log_2\mathcal P(\Psi_\varepsilon)\le C_{D,r,\gamma,A,B}\,\varepsilon^{-2D/(r+\gamma)}\log_2(1/\varepsilon),
\end{equation}
with
\begin{equation}
C_{D,r,\gamma,A,B}=2(1+\kappa)^{2D}\Bigl[N\bigl(2+2\log_2(2N)+\log_2(B+1)\bigr)+\tfrac{D}{r+\gamma}+D\log_2(1+\kappa)\Bigr],
\end{equation}
which is the constant in the statement.
\end{proof}


\begin{figure}[htbp]
    \centering
    \includegraphics[width=0.8\textwidth]{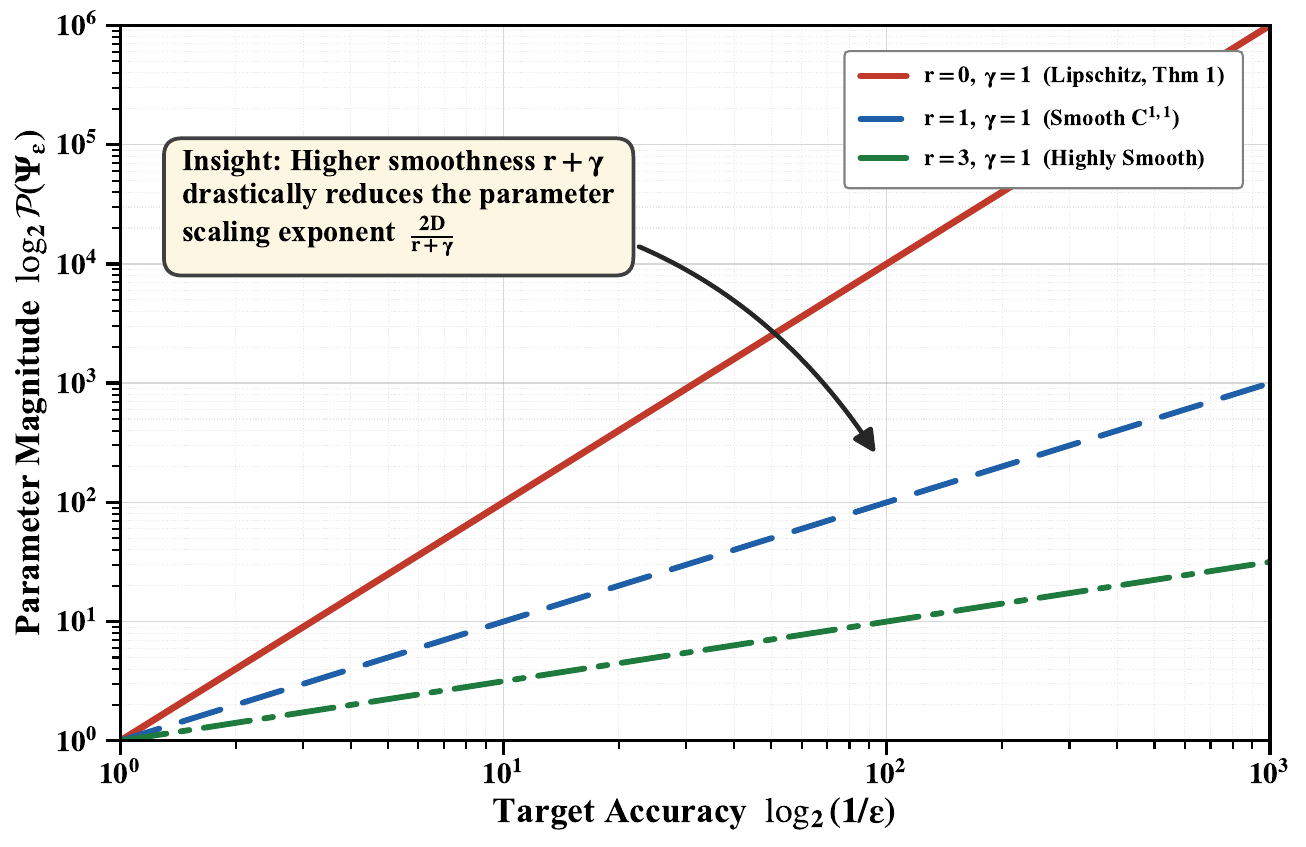}
    \vspace{-5mm}
    \caption{Scaling law of the parameter magnitude $\log_2 \mathcal{P}(\Psi_\varepsilon)$ with respect to the target accuracy $\log_2(1/\varepsilon)$.}
    \label{fig:scaling_law}
\end{figure}

\section{Proof of Lemma \ref{lem:lemma2}}
\label{sec:lemma2}

Lemma \ref{lem:lemma2} shows an exact realization for gridwise polynomials with rational coefficients.
Exact realization phenomena already appear in several related settings, but with different scopes. The exact result of \citep{shen2022fixed} concerns finite-valued classification functions on pairwise disjoint bounded closed sets.  Its key mechanism is to directly leverage the distance function 
\begin{equation}
f(\x):=
\frac{\operatorname{dist}(\x,B)}
{\operatorname{dist}(\x,A)+\operatorname{dist}(\x,B)}
\qquad
\text{for any } \x\in\mathbb{R}^D,
\end{equation}
where $A$ and $B$ are two disjoint bounded closed sets, and eliminate the error at the margin: the labels are rescaled to separated odd integers, a continuous lifting is approximated within less than half the label gap, and a final correction map sends the approximate value exactly back to the intended label. But in \citep{shen2022fixed}, the distance function is not represented by a network; therefore, its form is implicit. The ReQU-based exact algebraic result of \citep{li2020powernet} is best understood as an exact polynomial-evaluation theorem.  Since a ReQU activation $\rho_s(t)$ directly generates power functions, products and monomials can be built exactly by algebraic identities and network composition. Linear combinations of these monomials then represent a polynomial with no approximation error.  In this sense, once the polynomial coefficients are known, ReQU networks provide a natural exact evaluator for local polynomial rules.  The scope of this exactness is algebraic: it applies to polynomials and to smooth functions only after polynomial approximants have first replaced them. 

\subsection{Proof Idea of Lemma \ref{lem:lemma2}}
\label{subsec:proof-idea-thm2}

The proof is parallel to that of \Cref{thm:main-theorem1}, with one important difference.  The recovered gridwise object is a quantized constant, whereas here the recovered object is a local polynomial.  The exact construction is feasible because, on each grid, the local polynomial is exactly determined by finitely many rational coefficients, and these coefficients can be represented exactly by integers.

\emph{1. Calculation of the local coordinates.} For gridwise polynomials, we do not need to partition their input domain, as they are already defined on uniform grids. Thus, as in Steps~1--2 of Section~\ref{subsec:proof-maintheorem1}, a single floor layer computes $\rho_\floor(Mx_d)$, $d=1,\dots,D$, from which we read off two quantities. The first is the grid annotation
\begin{equation}
\label{eq:calculate-tx}
\Psi^{(1)}_1(\x):=1+\sum_{d=1}^D M^{d-1}\rho_\floor(Mx_d)\in\{1,\dots,M^D\},
\end{equation}
the base-$M$ encoding of the active grid. Because here our goal is an exact representation of local polynomials, we do not need to quantize $f$. Instead, we compute the local coordinate, 
\begin{equation}
\Psi^{(2)}_1(\x):=M(\x-\x_{\m})=M\x-\rho_\floor(M\x)\in[0,1)^D,
\end{equation}
where  $\x_\m:=\m/M$ is the corner of $\Omega_\m$.
Sharing the same $D$ floor units and carrying $\x$ forward, the sub-network is
\begin{equation}
\Psi_1(\x):=\big(\Psi^{(1)}_1(\x),\,\Psi^{(2)}_1(\x)\big).
\end{equation}

\emph{2. Exact representation by coordinate-wise CRT.}
Setting $N:=\binom{D+r}{r}$, we fix the ordering of the polynomial exponents
$
\{\balpha\in\mathbb N^D: 0\le|\balpha|\le r\}
=\{\balpha^{(1)},\dots,\balpha^{(N)}\}.
$
Each grid $\Omega_{\m}$ carries a rational polynomial that, in the local coordinate $M(\x - \x_{\m})\in[0,1]^D$, reads
\begin{equation}
Y_\m\!\big(\x - \x_\m \big)
=\sum_{i\le N} c_{\m,i}\,M^{|\balpha^{(i)}|}(\x - \x_\m)^{\balpha^{(i)}} ,
\qquad c_{\m,i}\in\mathbb Q.
\end{equation}
Since the index set $\{(\m,i):1\le |\m| \le M^D,\ 1\le i\le N\}$ is finite, so is the set $\{c_{\m,i}\}$. Since every $c_{\m,i}$ is a rational, we can multiply every $c_{\m,i}$ with a common integer such that we have $c_{\m,i}=s_{\m,i}/B_f$, and $s_{\m,i}\in\mathbb Z$. Thus, all our later constructions can be built on integers. With $C_f:=\max_{\m,i}|s_{\m,i}|$, we define
\begin{equation}
\Gamma_f:=(1+2C_f)\,(M^D)! .
\end{equation}
The sub-network $\Psi^{(1)}_{2}$ computes the modulus by one affine layer,
\begin{equation}
\Psi^{(1)}_{2}(\x):=1+\Psi^{(1)}_1(\x) \Gamma_f = p_{\m} .
\end{equation}
Because we use the same construction, by Lemma~\ref{lem:godel-crt-moduli-coprime}, $p_{\m}>\Gamma_f$ and $\{p_{\m}\}_{|\m|=1}^{M^D}$ are pairwise coprime. For each monomial index $i\le N$, Proposition~\ref{prop:crt} gives an integer $K_i$ with
\begin{equation}
\label{eq:Ki}
K_i\equiv s_{\m,i}\pmod{p_{\m}},\qquad m=1,\dots,M^D .
\end{equation}
Since $s_{\m,i}\in[-C_f,C_f]$, setting $E_{\m,i}:=K_i\bmod p_{\m}$, Lemma~\ref{lem:stlRange} gives the signed representative
\begin{equation}
s_{\m,i}=
\begin{cases}
E_{\m,i}, & 0\le E_{\m,i}\le C_f,\\[2pt]
E_{\m,i}-p_{\m}, & p_{\m}-C_f\le E_{\m,i}<p_{\m} .
\end{cases}
\end{equation}
The residue $E_{\m,i}$ is realized by $\Psi^{(2)}_{2,i}$:
\begin{equation}
\label{eq:Phi22}
\Psi^{(2)}_{2,i}(\x):=E_{\m,i}
=K_i-\Psi^{(1)}_{2}(\x)\,\rho_{\floor}\!\big(K_i ~\rho_{\inv}(\Psi^{(1)}_{2}(\x))\big).
\end{equation}
By Lemma~\ref{lem:recover-Sti}, the signed value $s_{\m,i}$ can be recovered from $\Psi^{(3)}_{2,i}$ by one $\rho_{\inv}$, one $\rho_{\floor}$, and the multiplication module
$\mathtt{Mult}(u,v):=\big(\rho_2(u+v)+\rho_2(-u-v)-\rho_2(u-v)-\rho_2(v-u)\big)/4$:
\begin{equation}
\Psi^{(3)}_{2,i}(\x):=s_{\m,i}
=\Psi^{(2)}_{2,i}(\x)-\mathtt{Mult}\!\Big(\Psi^{(1)}_{2}(\x),\,
\rho_{\floor}\big(\mathtt{Mult}(\Psi^{(2)}_{2,i}(\x),\rho_{\inv}(\Psi^{(1)}_{2}(\x) - C_f))\big)\Big),
\end{equation}
so that $c_{\m,i}=B^{-1}_f s_{\m,i}=B^{-1}_f\Psi^{(3)}_{2,i}(\x)$.

For each fixed $i$, the coefficient $c_{\m,i}$ is produced by the serial chain $\Psi^{(3)}_{2,i}\circ\Psi^{(2)}_{2,i}$ built on the shared modulus $\Psi^{(1)}_{2}$. These $N$ chains run over the single modulus channel $\Psi^{(1)}_{2}(\x)$ in parallel, while the local coordinate $\Psi^{(2)}_{1}(\x)$ from Step~1 is fed into later layers unchanged for Step~3. Collecting the outputs, $\Psi_2$ sends the pair $\big(\Psi^{(1)}_1(\x),\Psi^{(2)}_{1}(\x)\big)$ to
\begin{equation}
\Psi_2(\x):=\Big(B^{-1}_f\Psi^{(3)}_{2,i}(\x)_{i\le N},\ \Psi^{(2)}_{1}(\x)\,\Big)
=\big(\c_{\m},\ \Psi^{(2)}_{1}(\x)\big),
 \c_{\m}:=(c_{\m,1},\dots,c_{\m,N}).
\end{equation}

\emph{3. Exact polynomial computation.}
From the local coordinate $\Psi^{(2)}_{1}(\x)\in[0,1]^D$, define for each $i\le N$ the monomial
\begin{equation}
\label{Phi31}
\Psi^{(1)}_{3,i}(\x):=[\Psi^{(2)}_{1}(\x)]^{\balpha^{(i)}}
=\prod_{d=1}^{D}\big(\Psi^{(2)}_{1,d}(\x)\big)^{\alpha^{(i)}_d}.
\end{equation}
By Lemma~\ref{lem:ReQU-Multiplication}, each monomial $[\Psi^{(2)}_{1}(\x)]^{\balpha^{(i)}}$ is computed exactly. 
Running the $N$ blocks in parallel on disjoint channels (sharing the input $\Psi^{(2)}_1(\x)$ and carrying the coefficients $c_{\m,i}$), and aligning every chain to terminate at layer $r$, yields all monomials simultaneously. We write
$\Psi^{(1)}_{3}:=(\Psi^{(1)}_{3,1},\dots,\Psi^{(1)}_{3,N})$. The module $\Psi^{(2)}_{3}$ then scales each monomial by its recovered coefficient via $\mathtt{Mult}$ and sums by one affine layer:
\begin{equation}
\Psi^{(2)}_{3}\!\big(\c_{\m},\Psi^{(1)}_{3}(\x)\big)
:=\sum_{i\le N}\mathtt{Mult}\!\big(c_{\m,i},\,\Psi^{(2)}_{1}(\x)^{\balpha^{(i)}}\big)
=Y_{\m}\!\big(\Psi^{(2)}_{1}(\x)\big).
\end{equation}
and we set $\Psi_3\big(\c_{\m},\Psi^{(2)}_1(\x)\big):=\Psi^{(2)}_3\big(\c_{\m},\Psi^{(1)}_3(\Psi^{(2)}_1(\x))\big)$.

Each module is thus a map on the copied tuple:
\begin{equation}
\Psi_1:\x\mapsto\big(\Psi^{(1)}_1(\x),\Psi^{(2)}_{1}(\x)\big); 
\Psi_2:\big(\Psi^{(1)}_1,\Psi^{(2)}_{1}\big)\mapsto\big(\c_{\m},\Psi^{(2)}_{1}\big); 
\Psi_3:\big(\c_{\m},\Psi^{(2)}_{1}\big)\mapsto Y_{\m}\!\big(\Psi^{(2)}_{1}\big),
\end{equation}
so the three compose serially. Since $f(\x)=Y_{\m}(\x - \x_{\m})$ on $\Omega_{\m}$,
\begin{equation}
\Psi(\x):=\Psi_3\circ\Psi_2\circ\Psi_1(\x)=Y_{\m}(\x - \x_{\m})=f(\x).
\end{equation}

The block diagram of the network $\Psi = \Psi_3\circ\Psi_2\circ\Psi_1(\x)$  is visualized in Figure \ref{fig:thm2_arch}.
\begin{figure}[htbp]
    \centering
      \includegraphics[width=0.95\textwidth]{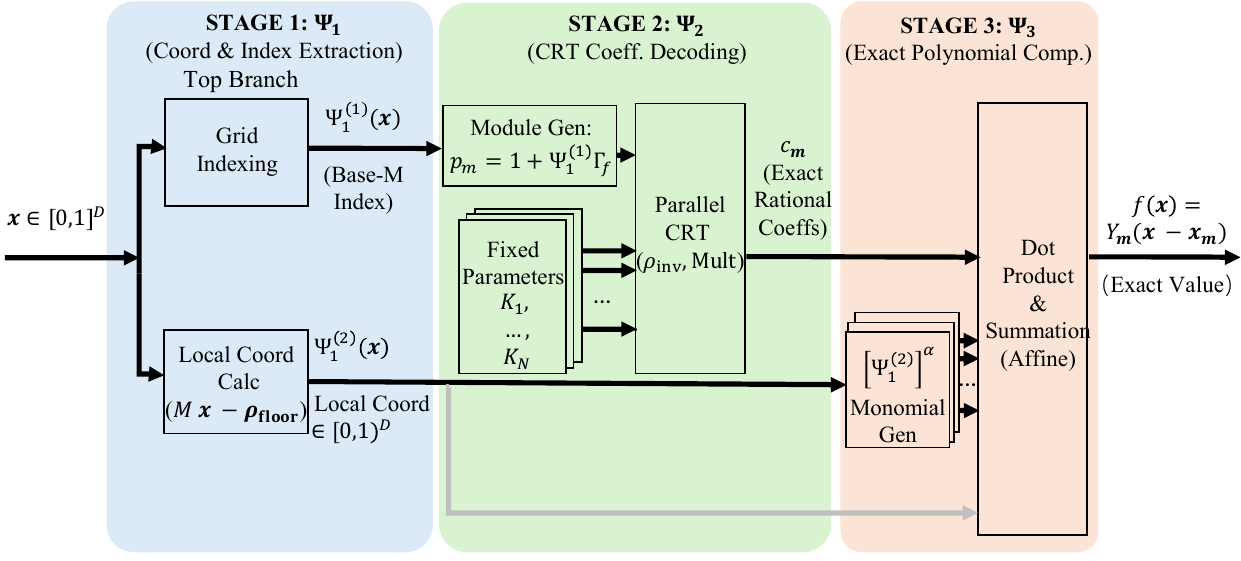}
    \caption{Architecture of the target network realizing $\Psi$.}
    \label{fig:thm2_arch}
\end{figure}

\subsection{Proof of Lemma \ref{lem:lemma2}}

\begin{lemma}[Range of the canonical residue]
\label{lem:stlRange}
Let $C,p\in\mathbb N_+$ with $p>2C$, and let $s\in[-C,C]\cap\mathbb Z$ and $K\in\mathbb Z$ satisfy $K\equiv s\pmod p$. 
Then $K\bmod p$ lies in one of the disjoint intervals $[0,C]$ and $[p-C,p)$.
\end{lemma}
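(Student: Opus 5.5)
The plan is to reduce everything to the canonical residue $E:=K\bmod p\in\{0,1,\dots,p-1\}$ and then do a two-case analysis on the sign of $s$. By definition of the standard residue we have $E\equiv K\equiv s\pmod p$ and $0\le E<p$. So $E$ is the unique element of $\{0,\dots,p-1\}$ congruent to $s$ modulo $p$. It therefore suffices to exhibit, in each case, an integer in $\{0,\dots,p-1\}$ congruent to $s$ and to identify its location.

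\emph{Case $0\le s\le C$.} Since $p>2C\ge C\ge s$, we have $s\in\{0,\dots,p-1\}$, and uniqueness of the residue gives $E=s\in[0,C]$.

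\emph{Case $-C\le s<0$.} Consider $s+p$. It is congruent to $s$, and it satisfies $p-C\le s+p<p$. Because $p>2C$ implies $p-C>C\ge0$, the number $s+p$ lies in $\{0,\dots,p-1\}$. Uniqueness then gives $E=s+p\in[p-C,p)$.

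Finally, disjointness of the two intervals follows directly from $p>2C$: indeed $p-C>C$, so $[0,C]\cap[p-C,p)=\emptyset$. This is what makes the sign-recovery rule used in the proof of Lemma~\ref{lem:lemma2} well defined: $s=E$ on the first interval and $s=E-p$ on the second.

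The argument is elementary, and there is no real obstacle. The only point needing care is the appeal to uniqueness of the standard representative in $\{0,\dots,p-1\}$. I would justify it by noting that two such integers congruent mod $p$ differ by a multiple of $p$ of absolute value less than $p$, so they are equal.
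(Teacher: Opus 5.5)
Your proof is correct and follows the paper's route: a two-case split on the sign of $s$ identifies $K \bmod p$ as $s$ or $s+p$ by uniqueness of the standard residue, and disjointness comes from $p-C>C$. Your explicit justification of uniqueness (two representatives differ by a multiple of $p$ of absolute value less than $p$) is a small improvement, since the paper only asserts it.
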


\begin{proof}
Since $K\equiv s\pmod p$ is the unique element which is congruent to $s$ modulo $p$.

\begin{enumerate}
    \item If $s\ge 0$, then $0\le s\le C<p$, so $s$ is already in $\{0,\dots,p-1\}$ and
$s = K\bmod p\in[0,C]$. 
 \item  If $s<0$, then $-C\le s<0$, so $p-C\le p+s<p$ with $p+s\in\{0,\dots,p-1\}$. Hence $K\bmod p=p+s\in[p-C,p)$ and $s=K\bmod p-p$.
\end{enumerate}
Because $p>2C$ gives $C<p-C$,  the intervals $[0,C]$ and $[p-C,p)$ are disjoint, and $K\bmod p$ falls in one of them. 
\end{proof}

\begin{lemma}[Recovering $s$ via $\rho_{\inv}$, $\rho_{\floor}$ and $\rho_2$]
\label{lem:recover-Sti}
Given
$s:\big([0,C]\cup[p-C,p)\big)\times\mathbb N_+\to\mathbb Z$ by
\begin{equation}
\label{eq:def-signed-s}
s(E,p)=
\begin{cases}
E, & 0\le E\le C,\\[2pt]
E-p, & p-C\le E<p,
\end{cases}
\end{equation}
where $C, p\in\mathbb N_+$ with $p>2C$, so that the two intervals $[0,C]$ and $[p-C,p)$ are disjoint, there is a network with activation $\rho_{\floor}$, $\rho_{\inv}$ and $\rho_2$ to realize it.
\end{lemma}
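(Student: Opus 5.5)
The plan is to realize $s(E,p)$ through the identity
\begin{equation*}
s(E,p)=E-p\,\sigma(E,p),\qquad \sigma(E,p):=\begin{cases}0,& 0\le E\le C,\\ 1,& p-C\le E<p,\end{cases}
\end{equation*}
so the task splits into producing the indicator $\sigma$ exactly and then forming the product $p\,\sigma$ with the ReQU module $\mathtt{Mult}$ from \eqref{eq:multiple}. This matches the formula displayed for $\Psi^{(3)}_{2,i}$ in the proof idea of Lemma~\ref{lem:lemma2}, namely $\sigma=\rho_{\floor}\bigl(\mathtt{Mult}(E,\rho_{\inv}(p-C))\bigr)$.

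\textbf{Step 1: the indicator.} Assume first that $p-C\ge 1$, which holds since $p>2C\ge C+1$. Then $\rho_{\inv}(p-C)=1/(p-C)$ exactly, by the first branch of the definition of $\rho_{\inv}$. Apply $\mathtt{Mult}$, which is an exact product by the polarization identity $(u+v)^2-(u-v)^2=4uv$. This gives $E/(p-C)$.

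Now distinguish the two cases.
\begin{itemize}
\item If $0\le E\le C$, then $E/(p-C)\le C/(p-C)<1$, because $p-C>C$. Hence $\rho_{\floor}$ returns $0$.
\item If $p-C\le E<p$, then $1\le E/(p-C)<p/(p-C)$. We need $p/(p-C)\le 2$, i.e.\ $p\ge 2C$, which is guaranteed by $p>2C$. Hence $\rho_{\floor}$ returns $1$.
\end{itemize}
So $\sigma=\rho_{\floor}\bigl(\mathtt{Mult}(E,\rho_{\inv}(p-C))\bigr)$ exactly.

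\textbf{Step 2: the output.} Compute $\mathtt{Mult}(p,\sigma)$, again exact, and subtract it from $E$ in an affine layer. This recovers $s(E,p)$ in both cases: $E-0=E$ and $E-p$.

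\textbf{Step 3: wiring into a network.} To use only the listed activations, carry $E$ and $p$ forward through $\rho_1$ identity channels. These are legitimate because $E\ge 0$ and $p>0$; alternatively one can use $\rho_1(t)-\rho_1(-t)$ to carry arbitrary signs. If $\rho_1$ must be avoided, note that $\rho_2$ alone yields the identity, since $t=\bigl(\rho_2(t+1)-\rho_2(t)-\rho_2(-t)+\rho_2(1-t)-1\bigr)/2$ for $|t|$ small. It is safer, though, to simply allow the $\rho_1$ carries already used in Theorem~\ref{thm:main-theorem1}.

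The layers then come in order:
\begin{enumerate}
\item an affine map forming $p-C$, followed by $\rho_{\inv}$;
\item the four $\rho_2$ units for $\mathtt{Mult}(E,1/(p-C))$;
\item an affine combination followed by $\rho_{\floor}$;
\item the four $\rho_2$ units for $\mathtt{Mult}(p,\sigma)$;
\item a final affine readout.
\end{enumerate}
This uses a constant number of layers and a constant width.

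\textbf{Main obstacle.} The only delicate point is the exactness of the floor threshold, which is exactly where the hypothesis $p>2C$ enters. That hypothesis gives both $C/(p-C)<1$ and $(p-1)/(p-C)<2$, so no rounding ambiguity occurs at the boundaries of either interval.
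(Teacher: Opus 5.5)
Your proposal is correct and takes essentially the same route as the paper: both write $s(E,p)=E-p\,\rho_{\floor}\bigl(E/(p-C)\bigr)$, obtain $1/(p-C)$ from $\rho_{\inv}$ and the products from $\mathtt{Mult}$, and use the same two-case check that $E/(p-C)$ lies in $[0,1)$ or in $[1,2)$ when $p>2C$. One correction to your side remark: the displayed $\rho_2$-only identity is false, because $\rho_2(t)+\rho_2(-t)=t^2$ makes its right-hand side equal $(t^2+1)/2$ for $|t|\le 1$; you do not need it, since $\mathtt{Mult}(t,1)=t$ holds exactly for every real $t$, and $\rho_{\floor}$ passes the integers $E$ and $p$ through unchanged.
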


\begin{proof} There are two cases:
\begin{enumerate}
    \item If $E \in [0,C]$, then $0\le \frac{E}{p-C}<1$. Therefore $\rho_{\floor}\left(\frac{E}{p-C}\right) = 0$. 
    \item If  $E \in [p-C,p)$, then $1\le \frac{E}{p-C}<\frac{p}{p-C}<2$. Therefore $\rho_{\floor}\left(\frac{E}{p-C}\right) = 1$. 
\end{enumerate}
So $s =E-p ~ \rho_{\floor}\left(\frac{E}{p-C}\right).$ 
The quantity $(p-C)^{-1}$ can be produced by $\rho_{\inv}$, the product can be produced by  $\mathtt{Mult}$ based on $\rho_2$, and cases are distinguished with $\rho_{\floor}$. Hence $s(E,p)$ is realized exactly on $[0,C]\cup[p-C,p)$.
\end{proof}

\begin{lemma}[ReQU realization of a monomial]
\label{lem:ReQU-Multiplication}
Given a monomial
\begin{equation}
\label{eq:monomial}
\x^{\balpha}=\prod_{d=1}^{D}x_d^{\alpha_d},\qquad \x\in[0,1]^D,
\end{equation}
where $\balpha=(\alpha_1,\dots,\alpha_D)\in\mathbb N^D$ is a multi-index of degree
$|\balpha|=\sum_{d=1}^D\alpha_d\le r$, there exists a network with activation $\rho_2$
(through $\mathtt{Mult}$) that computes $\x^{\balpha}$ exactly on $[0,1]^D$, with width $4$ and depth $|\balpha|$.
\end{lemma}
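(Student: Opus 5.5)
We need to prove Lemma ReQU realization of a monomial: width 4, depth $|\balpha|$, using $\mathtt{Mult}$ and $\rho_2$.

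The plan is induction on the degree $k=|\balpha|$. We list the factors of $\x^{\balpha}$ as a sequence $x_{d_1},x_{d_2},\dots,x_{d_k}$, where coordinate $d$ appears $\alpha_d$ times. The partial products are $P_j:=\prod_{l\le j}x_{d_l}$, so that $P_k=\x^{\balpha}$. The first ingredient is exactness of $\mathtt{Mult}$. Since $\rho_2(t)+\rho_2(-t)=t^2$ for every $t\in\mathbb R$, the definition in \eqref{eq:multiple} gives
\begin{equation}
\mathtt{Mult}(u,v)=\tfrac14\bigl((u+v)^2-(u-v)^2\bigr)=uv
\end{equation}
identically, with no approximation error. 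The same identity gives the square $u^2=\rho_2(u)+\rho_2(-u)$.

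Each multiplication step uses one hidden layer of width $4$. The affine map forms $u+v,\,-u-v,\,u-v,\,v-u$, the activation $\rho_2$ is applied, and the next affine map takes the combination $\tfrac14(\cdot)$. This final combination is absorbed into the following layer's affine map, exactly as in the accounting of Table~\ref{tab:width-depth-accounting}. So passing from $P_{j}$ to $P_{j+1}=\mathtt{Mult}(P_j,x_{d_{j+1}})$ costs one layer, and we obtain $P_k$ after $k-1$ multiplication layers. We then use one more layer to reach the stated depth $|\balpha|$, either by padding or by counting the layer that presents the inputs. Degenerate cases are handled directly: for $|\balpha|=0$ the monomial is the constant $1$, realized by a bias in an affine map, and for $|\balpha|=1$ it is a coordinate, realized by an identity carry $t=\rho_1(t)-\rho_1(-t)$, or simply $\rho_1(t)=t$ since $t\ge0$ on $[0,1]$.

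The main obstacle is the width count. After the first step, the layer must hold the four $\rho_2$ channels and also carry forward the later input coordinates $x_{d_{j+1}},\dots$, which naively exceeds width $4$. I would resolve this in one of two ways, depending on the convention. The first option is to interpret width $4$ as the \emph{additional} width of the block, with the inputs supplied on shared carry channels. This is the convention used in the width $D+5N+1$ of Lemma~\ref{lem:lemma2}, where the local coordinate $\Psi^{(2)}_1(\x)$ is carried on $D$ separate channels. The second option, if a self-contained width-$4$ network is required, is to note that on $[0,1]$ the coordinates are nonnegative and to carry them through $\rho_1$. In that case I would state the lemma with the carry channels counted externally, since this is how the lemma is invoked in the proof of Lemma~\ref{lem:lemma2}.

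Finally, I would check that all parameters are in $\{0,\pm1,\pm\tfrac14\}$, so this block contributes nothing to the parameter bound of Lemma~\ref{lem:lemma2}. I would also check that chains for monomials of lower degree can be aligned to terminate at layer $r$ by identity padding via $\rho_1$, which is valid on nonnegative values. Monomial values stay in $[0,1]$, so this padding is exact.
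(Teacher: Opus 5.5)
Your proposal is correct and follows essentially the same route as the paper: chain $\mathtt{Mult}$ blocks of four ReQU units, one factor per layer. Where you use a padding layer to reach depth $|\balpha|$, the paper starts the chain from $\mathtt{Mult}(1,x_{d_1})$. You also point out that the not-yet-used coordinates must be carried on extra channels, which the paper's proof passes over. Your first resolution, counting width $4$ exclusive of shared carry channels, is exactly how the lemma is used in the $4N+N+D$ width accounting of Table~\ref{tab:width-depth-poly}; your second option, by contrast, does not on its own give a self-contained width-$4$ network.
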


\begin{proof}
By Eq. \eqref{eq:multiple}, $\mathtt{Mult}(u,v)=uv$, using four ReQU units in a single layer.
Each coordinate $x_d$ is repeated  $\alpha_d$ times, writing as $\x^{\balpha}$.
Accumulate one factor per layer, as:
\begin{equation}
\x^{\balpha}
=\mathtt{Mult}\Bigl(\cdots\mathtt{Mult}\bigl(\mathtt{Mult}(1,
\underbrace{x_1,\dots,x_1}_{\alpha_1}),\,
\underbrace{x_2,\dots,x_2}_{\alpha_2}\bigr),\dots,\,
\underbrace{x_D,\dots,x_D}_{\alpha_D}\Bigr).
\end{equation}
Each $\mathtt{Mult}$ is one layer of four ReQU units, so the width is $4$; there are $|\balpha|$ such layers, so the depth is $|\balpha|$.
\end{proof}




\begin{proof}[Proof of Lemma~\ref{lem:lemma2}]
The construction of Steps~1--3 produces $\Psi=\Psi_3\circ\Psi_2\circ\Psi_1$.  Specially, $\Psi_1$ outputs the grid index $\Psi^{(1)}_1(\x)$ together with the local coordinate $M(\x-\x_{\m})\in[0,1)^D$. Next, 
by Lemma~\ref{lem:godel-crt-moduli-coprime} the moduli $\{p_{\m}\}$ over the $M^D$ grids are pairwise coprime, so Proposition~\ref{prop:crt} furnishes integers $K_i$ with $K_i\equiv s_{\m,i}\pmod{p_{\m}}$ for every grid.
Then  Lemma~\ref{lem:stlRange} gives $E_{\m,i}\in[0,C_f]\cup[p_{\m}-C_f,p_{\m})$, and Lemma~\ref{lem:recover-Sti} recovers $s_{\m,i}$ from $E_{\m,i}$ exactly through $\rho_{\inv},\rho_{\floor},\mathtt{Mult}$. Thus, $c_{\m,i}=B_f^{-1}s_{\m,i}$.
So that $\Psi_2$ recovers the exact rational coefficients $(c_{\m,i})_{i\le N}$ of the polynomial $Y_{\m}$ on the active grid $\Omega_{\m}$.
Finally, $\Psi_3$ constructs the target polynomial: by Lemma~\ref{lem:ReQU-Multiplication}, each monomial $(\x-\x_\m)^{\balpha^{(i)}}$ is realized exactly, so is $\sum_{i\le N}c_{\m,i}\,(\x-\x_\m)^{\balpha^{(i)}}$. Since our network can exactly represent every polynomial over each grid, our network can represent exactly the whole polynomial.

It remains to bound the width, depth, and parameter magnitudes, with $N:=\binom{D+r}{r}$ and $M^D$ grids. As Table~\ref{tab:width-depth-poly} shows, our counting follows the standard approximation literature: the width of a layer is its number of neurons, including identity channels copied forward; each hidden layer consisting of an affine map followed by element-wise activations counts as one layer.

\medskip
\underline{Width and depth of $\Psi_1$.}
In $\Psi_1$, it first applies $\rho_{\floor}$ to each $Mx_d$ and carries $\x$ (needed to form the local coordinate) as the input. Then, the affine combination for the index $\Psi^{(1)}_1(\x)$ together with $\Psi^{(2)}_1(\x)=M\x-\rho_{\floor}(M\x)$. Hence, for $\Phi_1$, we have $W_1=2D$ and $L_1=1$.

\medskip
\underline{Width and depth of $\Psi_2$.}
In $\Psi_2$, $D$ channels in $\Psi^{(2)}_1(\x)$ are copied until being passed into $\Psi_3$. Recovering coefficients uses two reciprocals: $1/p_{\m}$ for the residue $E_{\m,i}=K_i-p_{\m}\lfloor K_i/p_{\m}\rfloor$, and $1/(p_{\m}-C_f)$ for the signed value $s_{\m,i}=E_{\m,i}-p_{\m}\lfloor E_{\m,i}/(p_{\m}-C_f)\rfloor$. Forming $p_{\m}$ and $p_{\m}-C_f$ together, a single $\rho_{\inv}$ layer produces both. This yields Layers~2-7, as seen in Table~\ref{tab:width-depth-poly}, which results in a depth of 6. The widest layer is $\mathtt{Mult}$, which forms $\mathtt{Mult}(E{\m,i},1/(p_{\m}-C_f))$ for all $i\le N$, with a width of $D+5N+1$.

\medskip
\underline{Width and depth of $\Psi_3$.}
By Lemma~\ref{lem:ReQU-Multiplication}, each monomial $[\Psi^{(2)}_1(\x)]^{\balpha^{(i)}}$ is a sequential product of $|\balpha^{(i)}|\le r$ factors, one $\mathtt{Mult}$ per Layer. Running $N$ accumulators in parallel and padding shorter chains with $\mathtt{Mult}(\cdot,1)$, all monomials are obtained after multiplying the entries of $\Psi^{(2)}_1(\x)$, with the coefficient extraction $c_{\m,i}$ sharing the first of these layers. One further $\mathtt{Mult}$ layer scales each monomial by its coefficient, and one affine map sums them to $Y_\m(\x-\x_\m)$. The widest layer runs $N$ multiplications, which needs $4N$ ReQU units with $D+N$ copied channels. Hence, for $\Psi_3$, we have $W_3=D+5N$ and $L_3=r+2$. 

Now we conclude the total width and depth  based on the width and depth of three modules:
\begin{equation}
\begin{cases}
    & W(\Psi)=\max\{W_1,W_2,W_3\}=\max\{2D,\ D+5N+1\},\\
& L(\Psi)=L_1+L_2+L_3=r+9 ,
\end{cases}
\end{equation}
where $W(\Psi)=D+5N+1$ for $r\neq 0$.

\medskip
\underline{Parameter magnitudes.} We now estimate the largest parameter magnitude $\mathcal P(\Psi)$.
The parameters of $\Psi_1$ are $M,M^2,\dots,M^{D-1}$; those of $\Psi_2$ are $\Gamma_f$, $C_f$, $B_f^{-1}$, $K_i$, and fixed $\mathtt{Mult}$ weights of absolute value $\le1$; and those of $\Psi_3$ are padding constants $1$ and $\mathtt{Mult}$ weights $\le1$. The dominant parameters are the CRT integers $K_i$, controlled by the modulus product. Each modulus $p_{\m}=1+\Psi^{(1)}_1(\x)\Gamma_f$ with $\Psi^{(1)}_1(\x)\le M^D$ satisfies $p_{\m}\le p_{\max}:=1+M^D\Gamma_f$ and
$\Gamma_f=(1+2C_f)(M^D)!$. Since the least nonnegative CRT solutions obey $0\le K_i<\prod_{\m}p_{\m}\le p_{\max}^{M^D}$, and all other parameters are at most $C_f\le p_{\max}^{M^D}$, we have
\begin{equation}
\label{eq:param-bound-abstract}
\mathcal P(\Psi)\le p_{\max}^{M^D}+C_f\le 2\bigl(1+M^D(1+2C_f)(M^D)!\bigr)^{M^D},
\end{equation}
which is an explicit bound with $M,D,N$ and the integer scale $C_f$ of the rational coefficients.

The bound in Eq. \eqref{eq:param-bound-abstract} depends on $f$ only through the integer scale $C_f$ of its coefficients. For any fixed gridwise polynomial with rational coefficients, $C_f$ is finite and determined by $f$. Remember we write each $c_{\m,i}=s_{\m,i}/g_{\m,i}$ in lowest terms, let
\begin{equation}
F:=\max_{\m,i}\mathtt{bits}(c_{\m,i}),
\label{bound}
\end{equation}
and the coefficient bit length carried by $f$. Then $|s_{\m,i}|<2^F$ and $g_{\m,i}<2^F$, so $B_f\le 2^{F M^D N}$ and $C_f\le 2^F B_f\le 2^{F(M^D N+1)}$. Substituting Eq. \eqref{bound} into Eq. \eqref{eq:param-bound-abstract} and combining $(M^D)!\le(M^D)^{M^D}$, $1+2C_f\le 2^{F(M^D N+1)+2}$, and $1+M^D\Gamma_f\le 2M^D\Gamma_f$ give
\begin{equation}
\log_2 p_{\max}\le D(M^D+1)\log_2 M+F(M^D N+1)+3.
\end{equation}
 Therefore
\begin{equation}
\label{eq:param-bound-thm2}
\mathcal P(\Psi)\le
2^{\,M^D[(M^D+1)\,D \log_2 M\;+\;F\,(M^D N+1)\;+\;3] + 1},
\end{equation}
with $\mathcal P(\Psi)\le 2 ^{2 M^{2D}(D \log_2 M + FN)}$ for $r \neq 0$.


\end{proof}

\begin{table}[htbp]
\centering
\footnotesize
\renewcommand{\arraystretch}{1.5}\small
\setlength{\tabcolsep}{4pt}
\begin{tabular}{c|c|p{0.4\textwidth}|p{0.1\textwidth}|c}
\toprule
Layer & Module & Operation & $\mathcal P_\ell$ bound & Width \\
\midrule
\rowcolor{phi1bg}
$1$ & $\Psi^{(1)}_1$
 & $\bigl(\rho_\floor(Mx_d),\,\rho_1(Mx_d)\bigr)_{d\le D}$
 & $M$
 & \cellcolor{widthHL}$D{+}D$ \\
\rowcolor{phi2bg}
$2$ & $\Psi^{(2)}_1,\Psi^{(1)}_2$
 & $\bigl(\rho_\inv(p_\m),\,\rho_\inv(p_\m{-}C_f)\,,\,p_\m,\,\Psi^{(2)}_1\bigr)$
 & $\Gamma_f M^{D-1}$
 & $1{+}1{+}1{+}D$ \\
\rowcolor{phi2bg}
$3$ & $\Psi^{(2)}_2$
 & $\bigl(\rho_\floor(\hp{K_i}/p_\m)_{i\le N}\,,\,r_\m,\,p_\m,\,\Psi^{(2)}_1\bigr)$
 & \cellcolor{paramHL}$\displaystyle\prod_{\m}p_\m$
 & $N{+}1{+}1{+}D$ \\
\rowcolor{phi2bg}
$4$ & $\Psi^{(2)}_2$
 & $\bigl(\rho_2\langle p_\m,q_{\m,i}\rangle_{i\le N}\,,\,r_\m,\,p_\m,\,\Psi^{(2)}_1\bigr)$
 & $1$
 & $4{\cdot}N{+}1{+}1{+}D$ \\
\rowcolor{phi2bg}
$5$ & $\Psi^{(2)}_2,\Psi^{(3)}_2$
 & $\bigl(\rho_2\langle \hp{K_i}-\mathtt{Mult}(p_\m,q_{\m,i}),\,r_\m\rangle_{i\le N}\,,$ \newline
   $\quad (E_{\m,i})_{i\le N},\,p_\m,\,\Psi^{(2)}_1\bigr)$
 & \cellcolor{paramHL}$\displaystyle\prod_{\m}p_\m$
 & \cellcolor{widthHL}$4{\cdot}N{+}N{+}1{+}D$ \\
\rowcolor{phi2bg}
$6$ & $\Psi^{(3)}_2$
 & $\bigl(\rho_\floor(E_{\m,i}r_\m)_{i\le N}\,,\,   \newline (E_{\m,i})_{i\le N},\, p_\m,\, \Psi^{(2)}_1\bigr)$
 & $1$
 & $N{+}N{+}1{+}D$ \\
\rowcolor{phi2bg}
$7$ & $\Psi^{(3)}_2$
 & $\bigl(\rho_2\langle p_\m,\chi_{\m,i}\rangle_{i\le N}\,,\,(E_{\m,i})_{i\le N},\,\Psi^{(2)}_1\bigr)$
 & $1$
 & $4{\cdot}N{+}N{+}D$ \\
\rowcolor{phi3bg}
$8\dots 7{+}r$ & $\Psi^{(3)}_2,\Psi^{(1)}_3$
 & $\bigl(\rho_2\langle\Psi^{(2)}_1{}^{\balpha^{(i)}}\rangle_{i\le N}\,,\,(c_{\m,i})_{i\le N},\,\Psi^{(2)}_1\bigr)$
 & $1$
 & \cellcolor{widthHL}$4{\cdot}N{+}N{+}D$ \\
\rowcolor{phi3bg}
$8{+}r$ & $\Psi^{(2)}_3$
 & $\rho_2\langle c_{\m,i},\,\Psi^{(2)}_1{}^{\balpha^{(i)}}\rangle_{i\le N}$
 & $1$
 & $4{\cdot}N$ \\
\rowcolor{phi3bg}
$9{+}r$ & $\Psi^{(2)}_3$
 & $\displaystyle\sum_{i\le N}c_{\m,i}\,\Psi^{(2)}_1{}^{\balpha^{(i)}}=Y_\m(\Psi^{(2)}_1)$
 & $1$
 & $1$ \\
\bottomrule
\end{tabular}
\caption{Architecture and parameter magnitudes for the construction of the gridwise polynomial with rational coefficients.}
\label{tab:width-depth-poly}
\end{table}

%% file: chapters/07_Discussion.tex
\section{Conclusion}
\label{sec:discussion-limitations}

This paper has developed a novel fixed-architecture approximation framework based on the Chinese Remainder Theorem, which moves one step further in the domain of super-expressiveness. For Lipschitz targets (Theorem~\ref{thm:main-theorem1}), our construction for the first time enjoys an explicit relation between parameter magnitude and the approximation error. For H\"older-smooth targets (Theorem~\ref{thm:main-theorem2}), our construction also realizes an explicit representation, which shows a smoothness scaling law. The key to our two constructions is that the CRT ensures the accurate estimation of integer magnitude. There is still room for improvement in our constructions. First, the CRT parameter can be enormous. Though it is achievable, there might be redundancy in the construction. Second, like other super-expressive constructions, our result uses nonstandard activations, which narrows the scope of our theory. Our future work will attempt to address these limitations.